%% file: arxiv-main.tex
\documentclass{article}

\usepackage{microtype}
\usepackage{graphicx}
\usepackage{subcaption}
\usepackage{xcolor}
\usepackage{multirow}
\usepackage{booktabs} 

\usepackage{hyperref}

\usepackage{xurl}

 \usepackage[preprint]{icml2026}

\usepackage{amsmath}
\usepackage{amssymb}
\usepackage{mathtools}
\usepackage{amsthm}
\usepackage{paralist}

\usepackage[capitalize,noabbrev]{cleveref}

\theoremstyle{plain}
\newtheorem{theorem}{Theorem}[section]
\newtheorem{proposition}[theorem]{Proposition}

\theoremstyle{definition}
\newtheorem{definition}[theorem]{Definition}

\theoremstyle{remark}

\newtheorem{example}[theorem]{Example}

\newcommand{\dom}{\mathbf{val}}

\newcommand{\Pred}{\mathsf{rpred}}
\newcommand{\Predn}{\mathsf{npred}}

\usepackage[textsize=tiny]{todonotes}

\icmltitlerunning{Tabular Learning Beyond Independent Rows}

\begin{document}

\twocolumn[
  \icmltitle{Grables: Tabular Learning Beyond Independent Rows}



  \icmlsetsymbol{equal}{*}

  \begin{icmlauthorlist}
    \icmlauthor{Tamara Cucumides}{UAntwerp}
    \icmlauthor{Floris Geerts}{UAntwerp}
  \end{icmlauthorlist}

  \icmlaffiliation{UAntwerp}{Department of Computer Science, University of Antwerp. Belgium. }

  \icmlcorrespondingauthor{Tamara Cucumides}{tamara.cucumidesfaundez@uantwerp.be}

  \icmlkeywords{Machine Learning, ICML}

  \vskip 0.3in
]



\printAffiliationsAndNotice{}  

\begin{abstract}
Tabular learning is still dominated by row-wise predictors that score each row independently, which fits i.i.d. benchmarks but fails on transactional, temporal, and relational tables where labels depend on other rows. We show that row-wise prediction rules out natural targets driven by global counts, overlaps, and relational patterns. To make “using structure” precise across architectures, we introduce \emph{grables}: a modular interface that separates how a table is lifted to a graph (constructor) from how predictions are computed on that graph (node predictor), pinpointing where expressive power comes from. Experiments on synthetic tasks, transaction data, and a RelBench clinical-trials data confirm the predicted separations: message passing captures inter-row dependencies that row-local models miss, and hybrid approaches that explicitly extracts inter-row structure and feeds it to strong tabular learners yield consistent gains.
\end{abstract}

\section{Introduction} \label{intro}
\input{sections/intro}

\subsection{Background} \label{sec:prel}
\input{sections/prelims}

\section{Grables: A unifying interface for tabular and graph learning}
\label{sec:grables}
\input{sections/grables}

\section{Separations: when structure helps, and when message passing fails}
\label{sec:separations}
\label{sec:theory}
\input{sections/theory}

\section{Experiments and results}\label{sec:experiments}
\input{sections/experiments}


\section{Conclusions}
\input{sections/conclusions}

\section*{Impact Statement}
\input{sections/impact}

\bibliography{references}
\bibliographystyle{icml2026}

\newpage
\appendix
\onecolumn
\input{appendix/appendix-A}


\end{document}

%% file: sections/intro.tex

Tabular data underlies a large fraction of real-world machine learning, including applications in healthcare, finance, e-commerce, and the natural sciences. Despite sustained architectural progress, learning on tabular data remains dominated by a small class of highly optimized models, notably gradient-boosted decision trees \cite{xgboost, lightgbm_neurips, catboost} and carefully tuned multilayer perceptrons \cite{gorishniy2025tabm, realmlp_neurips}. A common and largely implicit assumption shared by these methods is \emph{row-locality}: predictions are made independently for each row, using only the attributes of that row, and are invariant to the presence, absence, or duplication of other rows in the table. This assumption aligns closely with standard i.i.d.\ benchmark constructions \cite{erickson2025tabarena} and helps explain the continued empirical strength of row-wise predictors.

At the same time, many practically important tabular datasets violate this assumption. In transactional, temporal, and entity-centric tables, prediction targets often depend on relationships between rows, such as shared values, global counts, overlaps, or co-occurrence patterns. While such dependencies are routinely modelled in specialized settings, they remain underexplored in general-purpose tabular learning. Recent approaches, including graph-based models \cite{gnn4td} and transformer-style tabular foundation models that relax row-locality via cross-row interactions \cite{tabpfn_nature,qu2025tabicl,ma2025tabdpt}, report gains that are difficult to interpret: it is often unclear whether they reflect principled representational advantages or increased capacity or optimization effects.

This motivates a fundamental question: when does modelling inter-row structure provide a genuine advantage over row-local prediction, and what precisely enables that advantage? \emph{Our answer is that this advantage is fundamentally representational: certain natural, extension-sensitive targets are inaccessible to any row-local predictor, regardless of model capacity.} To make this precise, we introduce \emph{grables}, a modular interface that separates how a table is lifted to a relational structure from how predictions are computed on that structure. A grable consists of a \emph{graph constructor}, which maps a table to a graph while preserving a canonical correspondence between rows and nodes, and a \emph{node predictor}, which operates on the resulting graph. This abstraction makes explicit where structural inductive bias enters and provides a common language to compare row-wise tabular models, message-passing neural networks, and transformer-style tabular methods whose structure is implicit in attention.

Building on this interface, we formalize \emph{grabular expressibility}, an expressiveness framework that characterizes which row-level prediction functions can be realized by a given family of constructors and predictors. Unlike standard neural network analyses \cite{Bar+2020, morris-WL-go-neural}, which fix the input graph, our framework quantifies the families of graph views induced from the same underlying table. Using this lens, we show that row-local predictors are exactly those that are invariant under table extensions and therefore incapable of representing natural inter-row targets driven by counts, overlaps, or shared witnesses. In contrast, message-passing models that operate on simple, schema-driven constructions such as incidence graphs can represent such targets, but only up to well-defined limits that can be characterized using logical tools.

We complement this analysis with controlled experiments on synthetic and real transactional data that isolate extension-sensitive dependencies. These experiments empirically validate the predicted separations and show that apparent gains by row-local models—including strong tabular baselines and tabular foundation models—can arise from row-local cues rather than genuine relational reasoning. Finally, through a case study on a relational benchmark, we show that explicitly exposing inter-row structure and combining it with strong tabular representations yields consistent improvements, demonstrating that tabular and graph-based representations are complementary rather than competing.

See Appendix~\ref{sec:related} for a detailed discussion of related work.

%% file: sections/prelims.tex
We first provide the necessary background on tables, graphs, MPNNs, row, node and logic-based predictors.

\paragraph{Notation}  For sets $X,Y$, let $Y^X\coloneqq\{f\mid f:X\to Y\}$, that is, $X^Y$ is the set of all functions from $X$ to $Y$.

\paragraph{Tables and row predictors}
Let $\mathcal{C}$ be a (possibly infinite) universe of \emph{column names} and let $\dom$ be a \emph{value domain}.
A \emph{schema} is a finite set $C\subseteq \mathcal{C}$ of column names.
For simplicity, we assume that every column $c\in\mathcal{C}$ takes values in $\dom$.
A \emph{$C$-row} is a function $r:C\to\dom$ (equivalently, an element of $\dom^C$).
A \emph{$C$-table} is a finite set $T\subseteq \dom^C$ of $C$-rows.
When $C$ is clear from context, we simply say \emph{row} and \emph{table}.

A \emph{row predictor} is a function $\Pred$ that maps each pair $(C,T)$ to a \emph{per-row classifier}
$
\Pred_{C,T}:T\to\{0,1\}:r\mapsto \Pred_{C,T}(r)$.

\paragraph{Graphs and node predictors}
A \emph{graph} is a tuple
$
G=(V,E_1,\ldots,E_m,\rho)
$,
where $V$ is a finite set of nodes, each $E_i\subseteq V\times V$ is a (directed) edge relation (undirected
edges are modelled by including both directions), and $\rho$ assigns \emph{row-valued features} to nodes.
Formally, each node $v\in V$ carries a finite \emph{local schema} $C(v)\subseteq\mathcal{C}$ and a feature row
$\rho(v)\in\dom^{C(v)}$. Thus nodes may have different local schemas. We let $N_i(v)$ consist of all vertices adjacent to $v$ along $E_i$-edges.

A \emph{node predictor} is a function $\Predn$ that assigns to each graph $G$ a (possibly partial) node classifier
$
\Predn_G:\Predn_G\to\{0,1\}: v\mapsto \Predn_G(v)$.

\paragraph{MPNNs induce node predictors}
Fix a graph $G=(V,E_1,\ldots,E_m,\rho)$.
A \emph{$k$-layer (heterogeneous) Message-Passing Neural Network} (MPNN) computes \emph{node embeddings}
$\mathbf{h}_v^{(\ell)}\in\mathbb{R}^d$ for $\ell=0,\ldots,k$, initialized from the node information
$\rho(v)$ (and optional type $C(v)$), and updated for $k$ rounds by permutation-invariant aggregation
of messages from neighbours (per edge type) \citep{Gil+2017,Schlichtkrull+2018}.
The final embedding is denoted by $\mathbf{h}_v^{(k)}$, for $v\in V$.
These embeddings induce a (total) \emph{node classifier} via a learnable scalar readout and thresholding:
$$
\Predn_G^{\mathsf{MPNN}}:V\to\{0,1\}:v\mapsto \mathbb{I}\!\left[\mathsf{READ}\!\left(\mathbf{h}_v^{(k)}\right)>0\right],
$$
where $\mathsf{READ}:\mathbb{R}^d\to\mathbb{R}$ is learnable and $\mathbb{I}[\cdot]$ is the indicator.
Since MPNNs are standard we defer details to Appendix~\ref{appsec:background}.

\paragraph{FO $\cap$ MPNN node classifiers.}
A $k$-layer MPNN predicts a node label via $k$ rounds of neighbourhood
aggregation, hence depends only on the node’s $k$-hop neighbourhood.
To formalize this locality, we relate MPNNs to \emph{logic-defined} node queries:
First-order logic (FO) is a broad language for specifying unary properties on graphs, while graded modal logic (GML) captures neighbourhood reasoning with thresholded counts.
We write $\text{GML}^{(\le k)}$ for the fragment of GML of \emph{modal depth at
most $k$}, i.e., formulas that nest the neighbourhood modalities at most $k$
times. The following theorem makes this precise: among unary FO-definable classifiers,
those realizable by $k$-layer MPNNs are exactly the queries definable in
$\text{GML}^{(\le k)}$.

\begin{theorem}[\citet{Bar+2020}]
\label{thm:fo-mpnn-gml}
Fix $k\in\mathbb N$ and a unary FO formula $\varphi(x)$ over a graph schema $\sigma_G$ representing graphs. The following are equivalent:
\begin{compactenum}
\item $\varphi$ is realized by some $k$-layer MPNN node classifier, i.e., $\Predn^{\mathsf{MPNN}}_G(v)=1 \iff G\models \varphi(v)$ for all $G,v$.
\item $\varphi$ is equivalent to some $\psi(x)\in \text{\normalfont GML}^{(\le k)}$, i.e., $G\models \varphi(v)\iff G\models \psi(v)$ for all $G,v$.
\end{compactenum}
\end{theorem}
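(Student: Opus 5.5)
The plan follows the Barceló et al. strategy, adapted to heterogeneous graphs with several edge relations $E_1,\ldots,E_m$ and to the $k$-layer bound. The two directions are handled separately.

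\emph{Direction (2)$\Rightarrow$(1).} We build an MPNN by induction on the subformulas of $\psi\in\text{GML}^{(\le k)}$. Enumerate the subformulas $\psi_1,\ldots,\psi_L$ and let the embedding have one coordinate per subformula. In layer $\ell$, coordinate $j$ should hold the truth value of $\psi_j$, for every $\psi_j$ of modal depth at most $\ell$. The cases are:
\begin{compactenum}
\item Atomic formulas, i.e.\ unary predicates on the features $\rho(v)$ or the type $C(v)$, are set at initialisation. This requires the finitely many relevant feature tests to be encoded as Boolean input coordinates.
\item Negation and conjunction are computed by a combine function of the form $\sigma(Wx+b)$ with truncated ReLU $\sigma(x)=\min(\max(x,0),1)$, using $1-x$ and $x+y-1$ respectively.
\item A graded modality $\Diamond_i^{\ge N}\chi$ along $E_i$ uses sum aggregation over $N_i(v)$ of the $\chi$-coordinate. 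The combine step then applies $\sigma(s-N+1)$, which is $1$ exactly when at least $N$ neighbours satisfy $\chi$.
\end{compactenum}
Each modal nesting consumes one round, so depth at most $k$ fits in $k$ layers; non-modal steps are absorbed into the same layer by composing affine maps, or by padding with identity layers. The readout is $\mathsf{READ}(h)=h_\psi-\tfrac12$. This is a routine construction.

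\emph{Direction (1)$\Rightarrow$(2).} If a $k$-layer MPNN realizes $\varphi$, then $\varphi$ is invariant under graded bisimulation of depth $k$ (unravelling to depth $k$ and counting). This holds because any MPNN embedding $\mathbf{h}^{(\ell)}_v$ depends only on the isomorphism type of the depth-$\ell$ unravelling of $v$ with multiplicities, i.e.\ on the colour assigned by $\ell$ rounds of colour refinement, which is proved by induction on $\ell$. The remaining step, and the main obstacle, is a van Benthem/Otto-style characterisation: an FO formula invariant under depth-$k$ counting bisimulation is equivalent to a GML formula of depth $\le k$.

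For the obstacle, the first approach is Otto's proof adapted to graded modalities. Let $q$ be the quantifier rank of $\varphi$. Only finitely many depth-$k$ graded types are relevant, because thresholds above some $N$ depending on $q$ are indistinguishable by rank-$q$ FO, via an Ehrenfeucht–Fraïssé game on sufficiently many disjoint copies. Each type $\tau$ is definable by a formula $\chi_\tau\in\text{GML}^{(\le k)}$. Then $\psi=\bigvee\{\chi_\tau : \tau \text{ realised by some } (G,v)\models\varphi\}$.

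Showing that $\varphi\to\psi$ and $\psi\to\varphi$ requires two steps. First, pass from $G$ to a depth-$k$ tree unravelling with each subtree replicated enough times, which preserves the bisimulation type. Second, show that two such unravellings with the same $N$-capped type are FO-$q$-equivalent at the root, by an Ehrenfeucht–Fraïssé argument using locality (Gaifman/Hanf) on these replicated trees. The delicate point is that bisimulation invariance is assumed only over finite graphs, as in the paper's setting. This is why the argument should use finite unravellings to depth $k$ with bounded replication rather than infinite trees and compactness. If this became too heavy, the fallback would be to cite the finite-model version of the graded van Benthem theorem, extended to multiple edge relations and node types by treating each $E_i$ as its own modality.
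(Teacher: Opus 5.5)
The paper gives no proof of its own: it cites Barcel\'o et al., whose construction spends one layer per subformula and whose converse rests on a graded van Benthem-type characterization. The exact correspondence between $k$ layers and modal depth $\le k$ is therefore something you genuinely have to argue. Your overall shape is right: an explicit construction, invariance under $k$ rounds of colour refinement, and capping counts at the quantifier rank. But the converse fails as planned.

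The step ``pass to a depth-$k$ unravelling with each subtree replicated enough times, which preserves the bisimulation type'' is false for \emph{counting} bisimulation. Duplicating a child subtree changes how many $E_i$-successors of that type its parent has. Counting-bisimulation invariance inherited from the MPNN is your only handle on $\varphi$. For instance, $\varphi(x)=\exists^{\ge 2}y\,E_1(x,y)$ is realized by a $1$-layer MPNN, is false at a node with exactly one successor, and becomes true once that successor's subtree is duplicated. Replication is a device from the ungraded setting; here it would ``prove'' plain-bisimulation invariance, which GML lacks. Gaifman/Hanf locality is also unnecessary, since the depth is fixed in advance by the number of layers.

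What works is the following. Let $q=\mathrm{qr}(\varphi)$, and let $T_{G,v}$ be the \emph{plain} depth-$k$ unravelling of $(G,v)$ with full features copied. This is a finite graph at whose root the MPNN, and hence $\varphi$, answers as at $v$. By induction on depth with an Ehrenfeucht--Fra\"{\i}ss\'e game, two rooted trees of depth $\le k$ are $\equiv_q$ at the root whenever their $\mathcal U_\rho$-labelled counting types agree with all successor counts capped at $q$. Duplicator plays inside matched child subtrees of equal capped type and edge label. She opens a fresh one whenever Spoiler enters a new subtree, which is possible because at most $q$ subtrees are ever entered. Hence $\varphi$ is equivalent to the disjunction of the characteristic $\mathrm{GML}^{(\le k)}$ formulas (thresholds $\le q$) of the capped types realized by models of $\varphi$. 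Take the types over $\mathcal U_\rho$ only, but unravel with full features, since the MPNN may see more of $\rho(v)$ than $\varphi$ does.

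Your fallback has its own hole. Otto's theorem gives \emph{some} GML equivalent, not one of depth $\le k$. You would still need that on $T_{G,v}$ every modal subformula inside $k$ nested modalities is evaluated only at leaves, where it is constant, so it can be replaced by $\top$ or $\bot$.

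In the construction, ``non-modal steps are absorbed into the same layer by composing affine maps'' is wrong for your combine $\sigma(Wx+b)$ with identity messages. The map $\sigma(W_2\sigma(W_1x+b_1)+b_2)$ is not of that form. Moreover, no set $\{a s_p+b s_q+c>0\}$ contains $(1,1)$ while avoiding $(0,0)$, $(M,0)$ and $(0,M)$ for all $M$. So a conjunction of two freshly counted modal subformulas sitting under a further modality cannot be produced in one round, and padding with identity layers only adds rounds. This is easily repaired because the paper allows $\mathsf{UPD}^{(\ell)}$ to be any learnable map. Use a small MLP that first thresholds the aggregated counts and then evaluates the Boolean combinations of all depth-$\ell$ subformulas in the same round; after that, ``depth $\le k$ fits in $k$ layers'' goes through.
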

Here $G \models \varphi(v)$ means that the graph $G$ satisfies the property
expressed by $\varphi$ when the free variable is interpreted as node $v$.
For example, the property ``a node has a neighbour with the same label'' is
definable in GML (already at modal depth~$1$), and therefore can be realized by
a $1$-layer MPNN.
Formal definitions are deferred to Appendix~\ref{appsec:background}.




%% file: sections/grables.tex
As mentioned in the Introduction, tabular prediction sits at an awkward crossroads.
Classical tabular methods (e.g., gradient-boosted trees) operate \emph{directly} on rows and columns and typically expose no explicit relational structure beyond what is manually engineered.
At the other extreme, relational and graph-based models \emph{start} from an explicit graph and learn over its neighbourhoods.
Recent transformer-style tabular models (e.g., TabPFN) further blur the picture: they may capture rich interactions between cells (column/value pairs) and rows, but the induced ``structure'' lives implicitly in attention and is therefore hard to compare across architectures.
This raises a basic question: \emph{what does it even mean to compare the expressive power of methods that (i) see only rows, (ii) see an explicit graph, or (iii) induce structure implicitly?}

We address this by making the missing object explicit.
We separate (a) \emph{how a table is turned into a relational structure} from (b) \emph{how predictions are computed on that structure}.
The key abstraction is a \emph{graph constructor} $\gamma$ that equips each table with a graph representation while preserving a canonical correspondence between rows and nodes.

\subsection{Grables}
Let $C$ be a schema and $T$ a $C$-table.
A \emph{graph constructor} $\gamma$ maps each pair $(C,T)$ to a \emph{graph}
$G_{C,T}^\gamma=(V,E_1,\ldots,E_m,\rho)$.
\emph{Crucially}, $G_{C,T}^\gamma$ contains one distinguished node for each row $r\in T$; we denote it by $v_r$ and call it the \emph{row node}.
Row nodes store the row itself as features: $\rho(v_r)=r\in\dom^C$.
Beyond row nodes, $\gamma$ may introduce auxiliary nodes (e.g., values, column identifiers, join keys), edge relations, and additional features.
We call any graph obtained from a table in this way a \emph{grable}.\footnote{We choose Grable=graph+table; Tabphs=table+graph does not sound well ;-)}


We give two natural examples. The first is the trivial grable: a graph encoding of a table, without any edges.

\begin{example}[Trivial grable]\label{ex:trivial-grable}
For any schema $C$ and $C$-table $T$, the \emph{trivial grable} of $(C,T)$ is the graph
$G^{\mathrm{triv}}_{C,T}=(V,E,\rho)$ where $V=\{v_r \mid r\in T\}$, $E=\emptyset$, and $\rho(v_r)=r$ for all $r\in T$.
We write $\gamma_{\mathrm{triv}}$ for the corresponding constructor.\hfill $\blacktriangleleft$
\end{example}

The second example is the standard incidence encoding of tables, used for example in \citet{Gro+2020} to assess the expressive power of MPNNs on relational data.

\begin{example}[Incidence grable]\label{ex:incidence-grable}
Fix a schema $C\subseteq\mathcal C$ and an $C$-table $T$, and write $C=\{c_1,\dots,c_m\}$.
The \emph{incidence grable} of $(C,T)$ is the graph
$G^{\mathrm{inc}}_{C,T}=\bigl(V,\;E_1,\dots,E_m,\;\rho\bigr)$
defined as follows.
\begin{compactitem}
    \item $V=V_{\mathrm{row}}\cup V_{\mathrm{val}}$, where $V_{\mathrm{row}}=\{v_r \mid r\in T\}$ are the row nodes and
$
    V_{\mathrm{val}}=\{u_{i,a}\mid i\in[m],\ v\in \dom,\ \exists r\in T: r[c_i]=a\}
 $   contains one \emph{value node} for each column--value pair that occurs in $T$.
    \item For each $i\in[m]$, the edge relation $E_i\subseteq V^2$ is given by
    $(v_r,u_{i,a})\in E_i$ iff $r[c_i]=a$ (optionally also adding $(u_{i,a},v_r)$ to make edges undirected).
    \item Row nodes store full rows, $\rho(v_r)=r\in\dom^C$; value nodes may store their value (and optionally the column), e.g.\ $\rho(u_{i,a})(c_i)=a$.
\end{compactitem}
We write $\gamma_{\mathrm{inc}}$ for the corresponding constructor.\hfill $\blacktriangleleft$
\end{example}

We can also formalize grables corresponding to prominent transformer-style tabular models, including TabPFN \citep{tabpfn_nature} as well as CARTE \citep{kim2024carte} and TARTE \citep{kim2025table}, by making their tokenization and attention patterns explicit as graph constructors. We also
consider Neighbourhood Feature Aggregation (NFA) \citep{bazhenov2025GraphLand}, which is a \emph{derived} constructor: it requires a starting grable $G^\gamma_{C,T}$ to define $1$-hop neighbourhoods, but its output is a \emph{trivial grable} so that downstream learners can be purely tabular.
See Appendix~\ref{appsec:grables} for details.

\subsection{Grabular expressiveness}
\label{sec:grabular-expressiveness}

Grables let us put tabular, relational, and transformer-style models on the same axis.
The crucial point is that, for tables, there is \emph{no single given graph}:
the model must first \emph{choose a view} of the table as a graph structure, and only then compute predictions.
This yields a two-part inductive bias:

\begin{compactitem}
    \item a \emph{constructor} $\gamma$ that decides \emph{what structure the model is allowed to use} (which nodes/edges/features), and
    \item a \emph{node predictor} $\Predn$ that decides \emph{how predictions are computed} on that structure.
\end{compactitem}

This separation is exactly what is missing when comparing
(i) classical tabular methods that ``see only rows'',
(ii) graph (and also relational) models that start from an explicit neighbourhood structure,
and (iii) transformer-style tabular models whose structure is implicit in attention.
We propose an expressiveness notion makes these differences transparent: it attributes power either to a \emph{stronger graph view} or to a \emph{stronger predictor on a fixed view}. 

Let $\Gamma$ be a set of admissible graph constructors and let $\mathcal P$ be a class of node predictors.

\begin{definition}[Grabular expressibility]
\label{def:expressibility}
A row predictor $\Pred$ is \emph{$(\Gamma,\mathcal P)$-expressible} if there exist
a constructor $\gamma\in\Gamma$ and a node predictor $\Predn\in\mathcal P$ such that for \emph{all} schemas $C$,
all $C$-tables $T$, and all rows $r\in T$,
\[
\Pred_{C,T}(r)=\Predn_{G^\gamma_{C,T}}(v_r),
\]
where $v_r$ denotes the distinguished row node corresponding to $r$ in the grable $G^\gamma_{C,T}$.
\end{definition}

Equivalently, $\Pred$ is expressible if there is a \emph{single, uniform} way to turn every table into a graph
(using some $\gamma\in\Gamma$) so that the desired row-level decision reduces to a node-level decision on row nodes.
We write
$
\mathsf{Expr}(\Gamma,\mathcal P)
:=\{\Pred \mid \Pred \text{ is } (\Gamma,\mathcal P)\text{-expressible}\}$
for the resulting expressiveness class. Importantly, we can now compare models by set inclusion:
$(\Gamma,\mathcal P)$ is (weakly) at least as expressive as $(\Gamma',\mathcal P')$ if
$\mathsf{Expr}(\Gamma',\mathcal P')\subseteq \mathsf{Expr}(\Gamma,\mathcal P)$, etc.

\paragraph{Why this differs from standard graph expressiveness.}
Expressiveness results for MPNNs and graph logics typically \emph{fix the input graph} $G$ and study what a predictor can compute \emph{on that graph}.
In contrast, Definition~\ref{def:expressibility} quantifies over a \emph{family of graph views of the same underlying object}, that is, a table.\looseness=-1

\paragraph{Why restrictions matter.}
Without restrictions on $\Gamma$ and $\mathcal P$, the definition becomes vacuous:
with $\gamma_{\mathrm{triv}}$ (isolated row nodes with $\rho(v_r)=r$) and an unconstrained $\Predn$,
one can hard-code any $\Pred_{C,T}$. We thus
only obtain meaningful insights by fixing natural constructor families $\Gamma$ (e.g.\ incidence, CARTE/TARTE-style graphlets, attention-induced graphs)
and natural predictor classes $\mathcal P$ such as those expressible by MPNNs.

%% file: sections/theory.tex
In what follows, we fix the predictor side to be $k$-layer MPNNs, i.e., $\mathcal P=\mathcal P_{\mathrm{MPNN}}^{(k)}$,
and ask how the choice of constructor family $\Gamma$ shapes $\mathsf{Expr}(\Gamma,\mathcal P_{\mathrm{MPNN}}^{(k)})$. Proof details are in Appendix~\ref{appsec:results}.
\subsection{Baseline: on the trivial grable, MPNNs are row-wise}
\label{sec:trivial-grable-rowlocal}

The trivial constructor $\gamma_{\mathrm{triv}}$ maps $(C,T)$ to the edgeless graph
$G^{\mathrm{triv}}_{C,T}$ with one node $v_r$ per row $r\in T$ and node features $\rho(v_r)=r$.
With no edges, message passing cannot transmit information between rows.

\begin{proposition}[MPNNs on $\gamma_{\mathrm{triv}}$ are row-wise]
\label{lem:mpnn-triv-rowwise}
Fix $k\in\mathbb N$. For any $k$-layer MPNN node classifier on $G^{\mathrm{triv}}_{C,T}$, there exists a function
$f:\dom^C\to\{0,1\}$ (determined by the learned parameters) such that for all $r\in T$,
\[
\Predn^{\mathrm{MPNN}}_{G^{\mathrm{triv}}_{C,T}}(v_r)=f(r).
\]
In particular, the prediction for $r$ depends only on the column values of $r$ and not on the rest of $T$.
\end{proposition}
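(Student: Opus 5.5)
The plan is to argue by induction on the layer index $\ell=0,\dots,k$ that each embedding $\mathbf{h}^{(\ell)}_{v_r}$ is a fixed function of the row $r$ alone. Concretely, we will construct maps $g_\ell:\dom^C\to\mathbb{R}^d$, determined only by the learned parameters of the MPNN and the schema $C$ (not by $T$), such that $\mathbf{h}^{(\ell)}_{v_r}=g_\ell(r)$ for every table $T$ and every $r\in T$.

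For the base case $\ell=0$, the initialization is a fixed, parameter-determined function of $\rho(v_r)$ and, optionally, of the type $C(v_r)$. In $G^{\mathrm{triv}}_{C,T}$ we have $\rho(v_r)=r$ and $C(v_r)=C$, so we set $g_0(r):=\mathsf{INIT}(r,C)$.

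For the inductive step, recall that a layer has the form $\mathbf{h}^{(\ell+1)}_v=\mathsf{UPD}\bigl(\mathbf{h}^{(\ell)}_v,\mathsf{AGG}(\{\!\{\mathbf{h}^{(\ell)}_u : u\in N(v)\}\!\})\bigr)$, taken per edge type in the heterogeneous case. Since $G^{\mathrm{triv}}_{C,T}$ has $E=\emptyset$, every neighbourhood $N(v_r)$ is empty. The aggregation is therefore applied to the empty multiset and returns a fixed constant $\mathbf{a}_\emptyset$, which depends only on the parameters; for sum aggregation this constant is $\mathbf{0}$. We then set $g_{\ell+1}(r):=\mathsf{UPD}(g_\ell(r),\mathbf{a}_\emptyset)$, and the induction hypothesis gives $\mathbf{h}^{(\ell+1)}_{v_r}=g_{\ell+1}(r)$. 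Finally, define $f(r):=\mathbb{I}[\mathsf{READ}(g_k(r))>0]$. This is well defined on all of $\dom^C$ and agrees with $\Predn^{\mathrm{MPNN}}_{G^{\mathrm{triv}}_{C,T}}(v_r)$ for every $r\in T$.

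The main obstacle is checking that the MPNN formalism of Appendix~\ref{appsec:background} contains no hidden global component. Two things could break the argument: a graph-level readout or virtual node that mixes information across nodes, and an empty-neighbourhood aggregation whose value varies with the graph (for instance, a normalization by $|V|$). I would first confirm that the definition uses only per-edge-type neighbour aggregation, with an aggregator on the empty multiset that is a fixed constant. Under that reading the induction is immediate. The ``in particular'' clause then follows because $f$ is independent of $T$: adding, removing, or changing other rows leaves the prediction for $r$ unchanged.
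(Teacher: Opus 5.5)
Your proof is correct and follows essentially the same route as the paper: every neighbour multiset in $G^{\mathrm{triv}}_{C,T}$ is empty, so each layer's aggregate is a parameter-determined constant, and induction on $\ell$ gives $\mathbf{h}^{(\ell)}_{v_r}=g_\ell(r)$ with $g_\ell$ independent of $T$. Two small remarks: the constant should carry a layer (and edge-type) index, as the paper's $\mathbf{c}_\ell=\mathsf{Agg}^{(\ell)}(\emptyset)$ does, and your explicit threshold $f(r)=\mathbb{I}[\mathsf{READ}(g_k(r))>0]$ is more careful than the paper's $F(r):=\mathsf{Read}(g_k(r))$, which omits the indicator.
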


This motivates the standard tabular baseline notion: a row predictor is \emph{extension-invariant} (or \emph{row-local})
if its output on a row does not change when other rows are added or removed from the table.

\begin{definition}[Row-locality / extension invariance]
A row predictor $\Pred$ is \emph{row-local} if for all $(C,T)$ and $r\in T$,
$
\Pred_{C,T}(r)=\Pred_{C,\{r\}}(r)$,
equivalently, if there exists $f:\dom^C\to\{0,1\}$ with $\Pred_{C,T}(r)=f(r)$ for all $T$ and $r\in T$.
\end{definition}

Classical tabular models such as XGBoost, random forests, and MLPs are row-local at inference: once trained, they implement a map $f(r)$ and do not depend on the rest of $T$.
Also, CARTE and TARTE do \emph{not} introduce cross-row connectivity:
$\gamma_{\mathrm{CARTE}}$ builds a disjoint union of per-row star graphlets, and $\gamma_{\mathrm{TARTE}}$ connects only tokens within the same row (cfr.~Appendix~\ref{appsec:grables}).
In contrast, any $(\Gamma,\mathcal P_{\mathrm{MPNN}}^{(k)})$ with $\Gamma$ containing constructors that create cross-row edges admits non-row-local predictors. 
For example, on the incidence grable, rows can exchange information through shared value nodes; and in TabPFN, column-wise attention enables cross-row information flow.

\paragraph{Remark (logical view).}
On $G^{\mathrm{triv}}_{C,T}$ there are no edges, so graded modal operators are vacuous: any depth-$k$ GML row property is equivalent (on row nodes) to a Boolean combination of unary predicates encoding the row’s own features. In particular, constant-depth MPNNs have exactly row-wise expressive power on $\gamma_{\mathrm{triv}}$.

\paragraph{Takeaway.}
$\gamma_{\mathrm{triv}}$ is the exact baseline for row-wise tabular learning: on this constructor, every $k$-layer MPNN
reduces to an independent per-row classifier.

\subsection{Canonical inter-row structure: the incidence grable}
\label{sec:incidence-separations}

To expose inter-row dependencies, we use the schema-driven incidence constructor $\gamma_{\mathrm{inc}}$.
It creates a row node $v_r$ for each $r\in T$ and a value node $u_{i,v}$ for each occurring column--value pair $(a_i,v)$,
with a typed edge $(v_r,u_{i,v})\in E_i$ iff $r[a_i]=v$.
This bipartite encoding makes ``rows share values'' relationships explicit to message passing.

\paragraph{Four minimal inter-row targets.}
We use four intentionally simple per-row predictors, illustrated in Figure~\ref{fig:logical-tasks-incidence}:
\textsc{Unique} (a value appears only in $r$),
\textsc{Count}$_N$ (at least $N$ other rows share a designated value with $r$, with \textsc{Count}($=$) as the exactly $N$ case),
\textsc{Double} (\emph{another} row shares a value with $r$ and satisfies a local condition),
and \textsc{Diamond} (there is a \emph{another} row sharing two values with $r$).

\begin{figure}[t]
    \centering
    \includegraphics[width=0.98\linewidth]{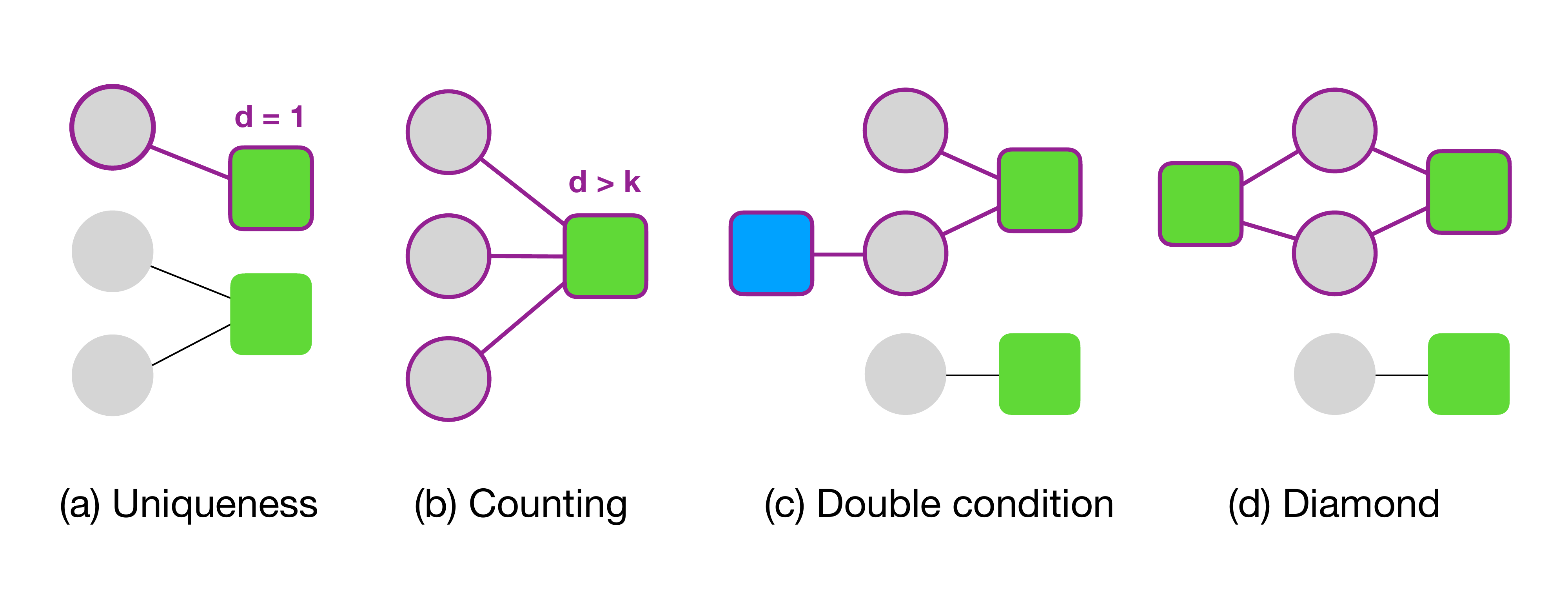}
    \caption{Incidence-grable patterns for our four tasks. Row nodes (circles) connect to column--value nodes (squares) via typed edges.
    (a) \textsc{Unique}: a column--value node adjacent to a single row node.
    (b) \textsc{Count}: the degree of a shared column--value node.
    (c) \textsc{Double}: a length-3 pattern $v_r\!-\!u_{i,v}\!-\!v_s\!-\!u_{j,w}$ with a local constraint on $v_s$.
    (d) \textsc{Diamond}: two shared column--value nodes witnessing that the \emph{same} row $v_s$ overlaps twice with $v_r$.}
    \label{fig:logical-tasks-incidence}
\end{figure}
\vspace{-1ex}
We start by observing that these targets genuinely require inter-row information, and therefore lie outside the row-local baseline captured by $\gamma_{\mathrm{triv}}$.

\begin{proposition}[Incidence separates row-locality]
\label{prop:inc-not-rowlocal}
Each predictor in Figure~\ref{fig:logical-tasks-incidence} is not row-local.
Equivalently, none lies in $\mathsf{Expr}(\{\gamma_{\mathrm{triv}}\},\mathcal P)$ for any node-predictor class $\mathcal P$.
\end{proposition}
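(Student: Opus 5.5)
The plan has two parts. First, show that each of the four targets violates the definition of row-locality by exhibiting a witness. Then derive the ``equivalently'' clause from the structure of the trivial grable.

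For the witnesses, we fix a schema $C$ and a row $r$, and build two tables $\{r\}$ and $T\ni r$ with $\Pred_{C,\{r\}}(r)\neq \Pred_{C,T}(r)$.

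\begin{compactitem}
\item \textsc{Unique}. Take $C=\{c_1\}$ and $r[c_1]=a$. In $\{r\}$ the value $a$ occurs only in $r$, so the predictor outputs $1$. Now add a second row $s$ that differs from $r$ on a second column $c_2$ (we use $C=\{c_1,c_2\}$ so that $s\neq r$ as a set element) but has $s[c_1]=a$. Then no designated value is unique and the output flips to $0$. If uniqueness is meant over any column, we give $s$ the same values as $r$ in every column except one, and use a third column so that all of $r$'s values are shared; alternatively we add two rows, each covering some of $r$'s values.
\item \textsc{Count}$_N$. In $\{r\}$ zero other rows share the designated value, so the output is $0$ for $N\ge 1$. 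We then add $N$ distinct rows agreeing with $r$ on the designated column and differing elsewhere, which makes the output $1$. For \textsc{Count}($=$) with $N=0$ we argue symmetrically, adding one sharing row.
\item \textsc{Double}. In $\{r\}$ there is no other row, so the output is $0$. We add one row $s$ that shares a value with $r$ and satisfies the local condition (any row-level condition is satisfiable by a suitable choice of values, assuming $\dom$ has at least two elements), which gives $1$.
\item \textsc{Diamond}. Again $\{r\}$ gives $0$, and adding $s$ that agrees with $r$ on two columns and differs on a third gives $1$.
\end{compactitem}

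In each case the definition of row-locality, $\Pred_{C,T}(r)=\Pred_{C,\{r\}}(r)$, fails.

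For the second part, we show that $\mathsf{Expr}(\{\gamma_{\mathrm{triv}}\},\mathcal P)$ consists only of row-local predictors, whatever $\mathcal P$ is. The point is that $G^{\mathrm{triv}}_{C,T}$ is a disjoint union of isolated nodes $v_r$ carrying only $\rho(v_r)=r$. The component of $v_r$ is therefore isomorphic to $G^{\mathrm{triv}}_{C,\{r\}}$, independently of $T$.

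Here is the main subtlety. An arbitrary node predictor $\Predn$ is a function of the whole graph $G$, so it could in principle look at other components. Taken literally, the ``equivalently'' therefore needs $\mathcal P$ to consist of predictors whose output on $v$ depends only on the isomorphism type of $v$'s connected component, as MPNNs and logic-based local predictors do (the paper's own remark notes that unconstrained $\Predn$ can hard-code anything). I would state this assumption explicitly, and invoke Proposition~\ref{lem:mpnn-triv-rowwise} for the MPNN case. Under the assumption, $\Predn_{G^{\mathrm{triv}}_{C,T}}(v_r)=\Predn_{G^{\mathrm{triv}}_{C,\{r\}}}(v_r)$, so any expressible $\Pred$ satisfies $\Pred_{C,T}(r)=\Pred_{C,\{r\}}(r)$ and is row-local. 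Combined with the first part, none of the four targets is expressible. I expect the main obstacle to be this point of making the locality hypothesis on $\mathcal P$ precise; the counterexamples themselves are routine.
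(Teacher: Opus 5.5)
Your witnesses are the same as the paper's: start from $\{r\}$ and add one row, or $N$ rows, sharing the relevant values. You are more careful than the paper about keeping $s\neq r$ under set semantics, requiring $N\ge 1$, and the any-column reading of \textsc{Unique}; there, when uniqueness ranges over all of $C$, your two-row variant is the one that works in general, since a single $s\neq r$ leaves $r$'s value in some column unshared. Your caveat on the ``equivalently'' clause is also correct and goes beyond the paper, whose proof only asserts it: for an unconstrained $\mathcal P$ the clause is false, because a node predictor can read $T=\{\rho(u)\mid u\in V\}$ off $G^{\mathrm{triv}}_{C,T}$ (as the paper's own ``Why restrictions matter'' paragraph concedes), so restricting $\mathcal P$ to component-local classes such as MPNNs is exactly the needed fix.
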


On $\gamma_{\mathrm{inc}}$, however, the first three tasks fall squarely within graded modal logic: they can be written as shallow $\mathrm{GML}$ formulas over the incidence signature (modal depth $2$ for \textsc{Unique}/\textsc{Count}$_N$, and depth $3$ for \textsc{Double}).
By Theorem~\ref{thm:fo-mpnn-gml} (FO$\cap$MPNN $\Leftrightarrow$ GML), this immediately implies realizability by constant-depth MPNNs.

\begin{proposition}
\label{prop:inc-mpnn-modal}
Let $\mathcal P_{\mathrm{MPNN}}^{(k)}$ denote $k$-layer MPNNs. For the tasks in Figure~\ref{fig:logical-tasks-incidence}(a--c),
\begin{compactitem}
\item $\textsc{Unique},\ \textsc{Count}_N\ \in\ \mathsf{Expr}(\{\gamma_{\mathrm{inc}}\},\mathcal P_{\mathrm{MPNN}}^{(2)})$,
\item $\textsc{Double}\ \in\ \mathsf{Expr}(\{\gamma_{\mathrm{inc}}\},\mathcal P_{\mathrm{MPNN}}^{(3)})$.
\end{compactitem}
\end{proposition}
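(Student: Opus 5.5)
The plan is to write each target as an explicit GML formula over the incidence signature of $\gamma_{\mathrm{inc}}$, with modal depth at most $2$ (for \textsc{Unique}, \textsc{Count}$_N$) or $3$ (for \textsc{Double}). Then we invoke Theorem~\ref{thm:fo-mpnn-gml} in the direction (2)$\Rightarrow$(1). Every GML formula is in particular FO, so the theorem yields a $k$-layer MPNN node classifier $\Predn^{\mathsf{MPNN}}$ with $\Predn^{\mathsf{MPNN}}_G(v)=1$ iff $G\models\psi(v)$ for all $G,v$. Specializing to $G=G^{\mathrm{inc}}_{C,T}$ and $v=v_r$ gives Definition~\ref{def:expressibility} with the single constructor $\gamma_{\mathrm{inc}}$.

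We work with undirected incidence edges, so each $E_i$ is symmetric and a row node reaches its value nodes and back. We use unary predicates $\mathsf{Row}$ (row nodes) and $\mathsf{Val}_i$ (value nodes of column $c_i$), which are readable from the local schema $C(v)$ or the feature row. We also use predicates $P_{i,a}$ for ``$\rho(u)(c_i)=a$''. Graded modalities $\Diamond_i^{\ge n}$ range over $E_i$-neighbours, and we write $\Diamond_i^{=n}$ for $\Diamond_i^{\ge n}\wedge\neg\Diamond_i^{\ge n+1}$.

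\textbf{Unique.} Reading ``a value appears only in $r$'' as ``some column $i$ whose value node has exactly one row neighbour'', the formula is $\psi_{\textsc{U}}(x)=\bigvee_{i}\Diamond_i^{\ge1}\Diamond_i^{=1}\mathsf{Row}$. This has depth $2$. The inner count is correct because $u_{i,a}$ is adjacent exactly to the rows with $r[c_i]=a$, and $v_r$ is one of them.

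\textbf{Count$_N$.} For a designated column $i$, use $\Diamond_i^{\ge1}\Diamond_i^{\ge N+1}\mathsf{Row}$, where $N+1$ accounts for $r$ itself. For the ``$=$'' variant use $\Diamond_i^{=N+1}$. Both have depth $2$.

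\textbf{Double.} Let $\theta(y)$ be the local condition on the other row, a Boolean combination of feature predicates. The pattern $v_r-u_{i,a}-v_s-u_{j,b}$ becomes $\Diamond_i\bigl(\Diamond_i(\mathsf{Row}\wedge\theta\wedge\Diamond_j\chi)\bigr)$, with $\chi$ a depth-0 predicate on the value node. This has depth $3$, and depth $2$ if the constraint is only on $v_s$'s own features.

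The main obstacle is the word \emph{another}: GML cannot say $y\neq x$. The plan is to handle it by counting. First take the disjunction over the case $\theta(x)$ false, where any witness is automatically $\neq x$. When $\theta(x)$ holds, require at least two rows through $u_{i,a}$ satisfying the inner condition, using $\Diamond_i^{\ge2}$. If $\theta$ depends on deeper structure ($\Diamond_j\chi$), then whether $x$ itself satisfies it is itself expressible, since $\chi$ is checkable at $x$'s own value nodes. Concretely, write
\[
\Diamond_i^{\ge1}\Bigl(\bigl(\neg\eta\wedge\Diamond_i^{\ge1}(\mathsf{Row}\wedge\eta)\bigr)\vee\Diamond_i^{\ge2}(\mathsf{Row}\wedge\eta)\Bigr),
\]
where $\eta$ is the depth-$1$ row condition. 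There is a subtlety: the outer ``$\neg\eta$'' must refer to $x$, but it sits inside $\Diamond_i$. So I would instead pull it out to $(\neg\eta\wedge\Diamond_i\Diamond_i(\mathsf{Row}\wedge\eta))\vee\Diamond_i\Diamond_i^{\ge2}(\mathsf{Row}\wedge\eta)$, which keeps depth $3$. Finally I would check both directions of correctness on $G^{\mathrm{inc}}_{C,T}$.
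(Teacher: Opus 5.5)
Your proof is correct and follows the paper's route. You give explicit $\mathrm{GML}$ formulas over the incidence signature, of modal depth $2$ for \textsc{Unique} and \textsc{Count}$_N$ and depth $3$ for \textsc{Double}, and then apply the direction (2)$\Rightarrow$(1) of Theorem~\ref{thm:fo-mpnn-gml} to $G^{\mathrm{inc}}_{C,T}$ and the row nodes $v_r$. Your counting treatment of ``another'' in \textsc{Double}, via $(\neg\eta\wedge\Diamond_i\Diamond_i(\mathsf{Row}\wedge\eta))\vee\Diamond_i\Diamond_i^{\ge2}(\mathsf{Row}\wedge\eta)$, is actually tighter than the paper's argument: the paper's formula $\psi_{\mathrm{dbl}}$ lets the witness $s$ be $v_r$ itself, so it never enforces $s\neq r$.
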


The remaining task, \textsc{Diamond}, is the first point where one must \emph{bind a shared witness}: the same other row must simultaneously witness two overlaps with $r$ (Figure~\ref{fig:logical-tasks-incidence}(d)).
This is FO-definable on the incidence graph, but Theorem~\ref{thm:fo-mpnn-gml} implies that no bounded-depth MPNN can express it.

\begin{proposition}[Diamond is beyond  MPNNs on incidence grable]
\label{prop:inc-fo-strict}
For every $k\in\mathbb N$,
\[
\textsc{Diamond}\notin \mathsf{Expr}(\{\gamma_{\mathrm{inc}}\},\mathcal P_{\mathrm{MPNN}}^{(k)}).
\]
\end{proposition}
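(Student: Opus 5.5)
Since \textsc{Diamond} is FO-definable on the incidence signature (there exist $y\neq x$ and two distinct value nodes $z_1\neq z_2$ with $E_i(x,z_1)\wedge E_i(y,z_1)\wedge E_j(x,z_2)\wedge E_j(y,z_2)$), Theorem~\ref{thm:fo-mpnn-gml} reduces the claim to showing that this formula is not equivalent, on row nodes of incidence grables, to any $\mathrm{GML}^{(\le k)}$ formula. Rather than working with formulas, I will use the semantic characterisation of $\mathrm{GML}^{(\le k)}$ by graded bisimulation (equivalently, $k$ rounds of colour refinement). Two pointed graphs $(G,v)$ and $(G',v')$ that are graded $k$-bisimilar satisfy the same $\mathrm{GML}^{(\le k)}$ formulas, and every $k$-layer MPNN assigns them the same embedding. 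So for each $k$ it suffices to exhibit two tables $T,T'$ over a common schema, with rows $r\in T$ and $r'\in T'$, such that \textsc{Diamond} holds for $r$ but not for $r'$, while $(G^{\mathrm{inc}}_{C,T},v_r)$ and $(G^{\mathrm{inc}}_{C,T'},v_{r'})$ are graded $k$-bisimilar. This bypasses Theorem~\ref{thm:fo-mpnn-gml} and directly shows that no $\Predn\in\mathcal P^{(k)}_{\mathrm{MPNN}}$ can agree with \textsc{Diamond} on both.

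The construction is the classical ``one long cycle versus two short cycles'' trick, transferred to bipartite incidence graphs. Use the schema $C=\{c_1,c_2\}$. In the incidence grable, a row is a node of degree two (one $E_1$-edge and one $E_2$-edge), so a table is essentially a bipartite multigraph between column-1 values and column-2 values, with rows as edges. Take $T$ to be a disjoint union of many copies of a 2-cycle: rows $(a,b)$ and a second row $s$ also equal to $(a,b)$ on both columns. Since tables are sets, I must make $s$ distinct by adding a third column $c_3$ with distinct ``id'' values, each occurring once, and give all rows the same local feature pattern otherwise. Then $r=(a,b,1)$ and $s=(a,b,2)$ share two value nodes, so \textsc{Diamond} holds. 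Take $T'$ to be a single long cycle of length $L\gg k$: rows $(a_t,b_t,t)$ and $(a_{t+1},b_t,t')$ arranged so each column-1 and column-2 value node has degree exactly $2$. No two rows then share both values, so \textsc{Diamond} fails. To make the bisimulation exact, I also need to neutralise the concrete feature values. Either I assume features are drawn from a finite set and reuse the same values, or, more cleanly, I build both tables so that all value nodes of a given column carry identical features and all row nodes carry identical non-id features, with the id values being unique in both.

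Next I verify graded $k$-bisimilarity. Every row node in both graphs has exactly one $E_1$-neighbour, one $E_2$-neighbour and one $E_3$-neighbour. Every $c_1$/$c_2$ value node has exactly two row neighbours, and every $c_3$ value node has exactly one. Colour refinement therefore never distinguishes nodes of the same type, so all row nodes are bisimilar at every depth. This works as long as node features agree, which is the main obstacle. The row features $\rho(v_r)=r$ contain the actual values, and if the values differ between $T$ and $T'$, a depth-$0$ MPNN could already separate them. I would handle this by choosing $T'$ so that the multiset of local row features matches, for instance by letting value nodes carry only their column and not the raw value (an option the incidence definition allows). The cleaner route is to argue against uniform MPNNs: I would reuse the same finite vocabulary in $T$ and $T'$ and compare only the local unravelling up to depth $k$, which is a tree that looks identical in both graphs once $L>2k$. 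This holds provided $r$ and $r'$ have identical rows, which I can arrange by choosing $r'$ with the same values as $r$.
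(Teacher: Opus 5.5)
Your combinatorial core is fine: two-row 2-cycles versus a long cycle, with every row node having one neighbour per column and every $c_1$/$c_2$ value node having degree two. The step that fails is ``colour refinement never distinguishes nodes of the same type'', because that claim ignores node labels, and neither of your fixes repairs it.

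\textbf{Why your fixes do not work.} By the definition of a grable, row nodes must carry $\rho(v_r)=r$, so stripping the values from value nodes is beside the point. Choosing $r'=r=(a,b,1)$ does not make the labelled unravellings agree either. The $E_1$-neighbour $u_{1,a}$ of $v_r$ has row neighbours labelled $(a,b,1),(a,b,2)$ in $T$, but $(a,b,1),(a,b',\cdot)$ with $b'\neq b$ in $T'$. So the two value nodes are not $1$-bisimilar, and $v_r,v_{r'}$ are separated after two rounds.

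\textbf{Why no other choice of tables helps.} With raw rows as labels, $(G,v_r)\sim_2(G',v_{r'})$ forces $r=r'$. It also forces $\{s\in T : s[c_i]=r[c_i]\}=\{s\in T' : s[c_i]=r[c_i]\}$, as the label multisets of the matched $E_i$-neighbours. But \textsc{Diamond}$(r)$ is a function of $r$ and this set. Concretely, a $2$-layer MPNN with injective $\mathsf{INI}$ and multiset-injective aggregation (the colour-refinement setting you invoke) can do the following. It lets $u_{i,r[c_i]}$ collect that set, and lets $v_r$ test whether it contains some $s\neq r$ with $s[c_j]=r[c_j]$. So under the reading you adopt (MPNNs that read raw rows, bypassing Theorem~\ref{thm:fo-mpnn-gml}), the statement is false for every $k\ge 2$, and no choice of $T,T'$ can rescue the argument.

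\textbf{The missing idea.} The paper's proof relies on a convention: features are visible only through the fixed finite set $\mathcal U_\rho$ of unary predicates in $\sigma_G^\rho$. Equivalently, $\mathsf{INI}$ factors through a node's $\mathcal U_\rho$-type. This is the setting of Theorem~\ref{thm:fo-mpnn-gml}, and it is why the paper's sketch equalizes ``local unary predicates'' and ``unary feature-types'' rather than raw values. To repair your proof you need to:
\begin{enumerate}
\item state this restriction explicitly;
\item choose the values so that all row nodes, and all value nodes of each column, share one common $\mathcal U_\rho$-type. This needs a short pigeonhole/Ramsey argument over an infinite value domain, a point the paper's ``fresh values'' also glosses over.
\end{enumerate}

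\textbf{What your construction then buys.} Once this is done, your construction works and is cleaner than the paper's. The two incidence graphs are label-uniform and degree-regular on each sort, so all row nodes are graded bisimilar at every depth at once; you need neither $L>2k$ nor unravellings. The paper, by contrast, has to pad filler rows around $s$ versus $s_i,s_j$ to equalize degrees and types out to distance $k$. Arguing directly on incidence grables also avoids relying on Theorem~\ref{thm:fo-mpnn-gml}, which quantifies over all graphs rather than over this restricted class.
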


\paragraph{Takeaway.}
Incidence gives a principled ``relational lift'' of tables: it strictly enlarges what is learnable beyond row-local tabular baselines,
yet logic still certifies a clean expressiveness gap for message passing. One can, however, also see this as a limitation of the \emph{constructor} rather than of message passing. In Appendix~\ref{appsec:results} we give a simple \emph{extension} of the incidence grable under which \textsc{Diamond} becomes definable in shallow GML and therefore MPNN-expressible.



\subsection{MPNNs on NFA: graph power compiled into feature width}
\label{sec:nfa-mpnn}
Neighbourhood Feature Aggregation (NFA) is \emph{derived} from a base constructor $\gamma$: it first builds a grable $G^\gamma_{C,T}$ to define $1$-hop neighbourhoods, computes a fixed menu of local aggregates around each row node, and then \emph{discards the graph} by outputting the trivial grable $G^{\mathrm{NFA}}_{C,T}=(V_{\mathrm{row}},\emptyset,\rho^{\mathrm{NFA}})$ (see Appendix~\ref{app:nfa_grable}). 

As a result, any predictor applied \emph{after} NFA is graph-agnostic and can access inter-row information only through the compiled features $\rho^{\mathrm{NFA}}$ (e.g., counts and simple statistics such as $\min/\max/\mathrm{mean}$, one-hot means). This makes NFA effective for shallow relational signals, but deeper patterns must be compiled by iterating augmentations or adding channels: on an incidence-style base graph with $k$ typed attribute edges, the number of distinct typed $L$-step interaction patterns grows as $k^L$, so preserving the information available after $L$ rounds of typed message passing can require exponentially many aggregate channels. This limits the effectiveness of NFA for deeper relational patterns. 

%% file: sections/experiments.tex
Our theory isolates two distinct sources of inductive bias in tabular pipelines: (i) the representation induced by the table-to-graph constructor, and (ii) the downstream predictor class. Section~\ref{sec:theory} then translates this into concrete expressiveness gaps, showing when row-local predictors (extension-invariant by design) cannot realize targets that depend on relationships between rows.
Our experiments~\footnote{Code available at: \url{https://github.com/TamaraCucumides/Grables}} are  not designed to demonstrate state-of-the-art performance, but to test whether these theoretical bottlenecks appear empirically under progressively more realistic conditions. Rather than treating graph-based models as generic alternatives to tabular methods, we use them as controlled interventions: we ask when row-local representations are structurally misaligned with the target, and whether exposing cross-row structure via explicit constructions and message passing resolves that mismatch.\looseness=-1


We organize our experiments around three questions: 

\textbf{(Q1)}~\emph{Can the representational separation between row-local models and message-passing models be observed in practice under controlled conditions?}  
We begin with synthetic data and tasks that isolate extension-sensitive dependencies, allowing us to directly test the theoretical distinctions introduced in the previous sections.

\textbf{(Q2)}~\emph{Does the same separation arise on real-world tabular data when the prediction task depends on cross-row structure?}  
We then move to real data while retaining tasks that explicitly depend on relationships between rows, enabling us to examine whether the same failure modes persist outside of hand-crafted settings.

\textbf{(Q3)}~\emph{Can the information coming from the graph structure be combined with strong tabular models to lift their structural limitations and how does this affect their performance? }  
Finally, we move to a case study where tabular models show good results and see if feeding them cross-row aware information improves their performance

Across all three settings, the experiments are designed to elucidate the role of representation rather than optimize performance. By progressively increasing complexity while keeping the underlying structural question fixed, the empirical results provide a coherent picture of when and why modeling cross-row structure matters in tabular learning.

\subsection{Experimental setup}
\paragraph{Datasets. } We focus on three data settings, each one aimed at each one of our experimental research questions. Statistics and details can be found in Appendix \ref{app:datasets}

\begin{enumerate}
    \item \textbf{Synthetic dataset.} A transactions dataset with four categorical features. 
    
    \item \textbf{Transaction dataset.} A real-world dataset derived from retail transactions, in which inter-row dependencies arise naturally \cite{retailDataset_UCI} . 

    \item \textbf{Clinical trials dataset.}  For our case study, we consider the clinical trials dataset \texttt{relbench-trial} from the Relbench benchmark \cite{relbench_neurips}. 
\end{enumerate}

\smallskip
\noindent
We deliberately avoid both generic tabular and graph benchmarks. Generic tabular benchmarks are typically curated under i.i.d.\ assumptions that align closely with row-local learning and therefore do not stress the representational limitations studied in this work. Graph benchmarks, by contrast, presuppose explicit structure and prediction targets that favor structural models by construction.

\paragraph{Tasks and leakage control.}
We instantiate the four logical tasks introduced in Section~\ref{sec:separations} on the synthetic and transaction datasets.
To prevent information leakage, tasks are generated \emph{independently within each split}, using disjoint
train/validation/test partitions (90/10/10).
When graph representations are used, a separate graph is constructed per split and all learning is performed in an
inductive setting.
For the \texttt{relbench-trial} dataset, we follow the standard RelBench protocol and predefined splits.
Full task-generation details are provided in Appendix~\ref{app:task-generation}.

\paragraph{Models.}
We consider three modeling paradigms. For row-local tabular learning, we include gradient-boosted decision trees (LightGBM~\cite{lightgbm_neurips}), neural networks (realMLP~\cite{realmlp_neurips}), and a tabular foundation model (TabPFN). These serve as strong, up-to-date baselines that perform consistently well across recent tabular benchmarks, including the living benchmark TabArena~\cite{erickson2025tabarena}. LightGBM and realMLP are tuned using AutoGluon~\cite{erickson2020autogluon}, while TabPFN (version~2.5) is used following its documentation.

For graph-based learning on table-derived structure, we use heterogeneous graph neural networks, represented by HeteroSage~\cite{GraphSage}, operating on the incidence graph. In the relational case study, we additionally include established relational models: relational deep learning (RDL~\cite{relbench_neurips}), relational graph transformers (RGT~\cite{dwivedi2025relationalgraphtransformer}), and a relational foundation model (KumoRFM~\cite{fey2025kumorfm}).

Architectural details, hyperparameters, and training protocols are reported in
Appendices~\ref{app:tabular_details},  ~\ref{app:tabpfn-settings} and~\ref{app:gnn_details}.

\paragraph{Evaluation metrics.}
Across all experiments, performance is measured using the area under the receiver operating characteristic curve
(ROC--AUC).
Because the logical tasks naturally induce class imbalance, we additionally report F1 scores where appropriate.

\subsection{Results on synthetic data}
\input{tables/tab-tasks}
Table~\ref{tab:logical-tasks} reports performance on the logical tasks. On \textsc{Unique} and \textsc{Count}, the GNN strongly outperforms tabular models and TabPFN, achieving near-perfect scores. The equality variant of \textsc{Count} is harder due to its finer decision boundary, explaining the modest performance drop despite extended training (Appendix~\ref{app:gnn_details}).

On \textsc{Double} and \textsc{Diamond}, GNN performance is comparable to tabular baselines and TabPFN. However, several tabular models achieve elevated scores despite lacking access to cross-row structure (notably realMLP on \textsc{Count}$(>)$, \textsc{Double}, and \textsc{Diamond}, and LightGBM on \textsc{Count}$(=)$, \textsc{Double}, and \textsc{Diamond}), indicating the presence of row-local shortcuts. We first probe the robustness of these effects via a sensitivity analysis based on controlled perturbations of the test set, which yields similar behavior (Appendix~\ref{app:synthetic-sensitivity}).

To distinguish shortcut learning from genuine relational reasoning, we introduce a stress-test dataset that suppresses row-local cues. As shown in Figure~\ref{fig:stress-main}, the elevated validation and test performance of tabular models collapses on the stress set (Appendix~\ref{app:stress-set}), indicating that these gains do not reflect the intended dependencies. The same effect is observed for the GNN on \textsc{Diamond}, which is structurally inaccessible under the incidence graph (Proposition~\ref{prop:inc-fo-strict}).

\begin{figure}[h]
\centering
\includegraphics[width=0.95\linewidth]{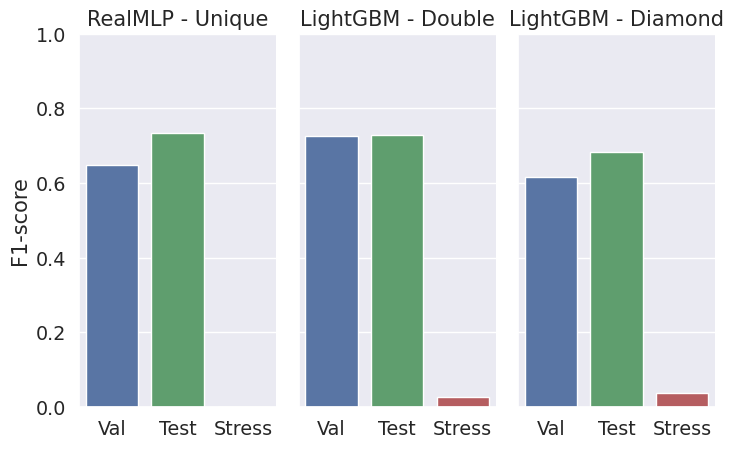}
\caption{F1-score in validation, test and stress data of RealMLP in Unique, and LightGBM in Double and Diamond tasks.}
\label{fig:stress-main}
\end{figure}

Finally, while TabPFN has implicit access to cross-row interactions, it consistently underperforms the GNN. Taken together, these results empirically confirm the representational separation predicted by our analysis: under controlled conditions, only models with explicit message passing reliably capture extension-sensitive relational targets.

\subsection{Results on retail transactions data}

Results on the retail transactions dataset largely mirror those observed on synthetic data. For \textsc{Unique} and \textsc{Count}, the GNN consistently outperforms both tabular models and TabPFN, indicating that the representational advantage conferred by explicit message passing persists beyond hand-crafted settings. For \textsc{Double} and \textsc{Diamond}, the gap narrows, but the GNN retains a systematic advantage.

The \textsc{Count}$(=)$ task highlights an additional challenge introduced by real data. While this target is representationally accessible to the GNN, the presence of heterogeneous and task-irrelevant features hinders its ability to isolate the fine-grained decision boundary, leading to degraded performance. When this extraneous noise is reduced, the GNN recovers the expected behaviour (Appendix~\ref{app:transactions-remove-noise}).

Overall, these results answer (Q2) in the affirmative: the separation between row-local and message-passing models also arises on real-world tabular data, although its empirical expression depends on the model’s ability to disentangle relational signal from feature noise.

\subsection{Case study: \texttt{relbench-trial}} \label{sec:relbench-trial}

We now turn to \textbf{(Q3)} and study whether inter-row structure can be combined with tabular models to overcome their inherent row-local limitations. We focus on the \texttt{relbench-trial} dataset from the RelBench benchmark, a relational learning setting that behaves atypically: unlike other RelBench tasks, it consistently favours strong tabular models operating on a single table over graph-based models trained on the full relational entity graph.\footnote{The relational entity graph consist of trivial grables, one for each table, and where row nodes are connected based on key/foreign key constraints. It can also be seen as a grable.}

This case study serves two purposes. First, it helps explain why purely relational models underperform on \texttt{relbench-trial}. Second, it allows us to test whether exposing inter-row structure—while retaining the strengths of tabular learning—can yield systematic improvements, as predicted by the analysis in Section~\ref{sec:incidence-separations}.

\paragraph{Baselines: tabular and relational graph models.}
Table~\ref{tab:clinical_results} reports validation and test ROC--AUC scores. Consistent with prior work, a strong tabular baseline (LightGBM) outperforms graph-based models trained on the relational entity graph (RDL, RGT) and is competitive with the relational foundation model KumoRFM. This suggests that much of the predictive signal for this dataset is concentrated within the target table rather than distributed across the database.

However, purely tabular predictors remain row-local and are therefore blind to dependencies between rows of the same table, even when such dependencies are predictive, which leads to a possible alternative: hybrid models. 
\input{tables/tab-trial}
\paragraph{Hybrid models: combining tabular learning with inter-row structure.}
To address this limitation, we augment the tabular model with representations that expose inter-row dependencies. We consider two complementary mechanisms. First, we use a time-aware \emph{tabular neighbourhood feature aggregation}, which compiles one-hop inter-row statistics from the incidence graph into explicit features (Appendix \ref{app:tab-ta-nfa}). Second, we use \emph{learned message-passing representations}, obtained from GNNs operating on (i) the incidence graph (Tab+GNN(T)), (ii) the relational entity graph (Tab+GNN(DB)), and (iii) both jointly (Tab+GNN(T+DB)). In all cases, the resulting structural features—whether fixed or learned—are concatenated with the original tabular attributes rather than replacing them.\footnote{The original RelBench study also considers feeding GNN embeddings into LightGBM. In that setting, the tabular features are omitted, which does not improve over the purely tabular LightGBM. Our setup differs in that we retain the original tabular features and treat graph-based representations as complementary, rather than substitutive sources of information.}

Across all hybrid variants, we observe a consistent performance gain over the purely tabular model, the graph-based models, and the relational foundation model (Table~\ref{sec:relbench-trial}). This confirms that explicitly exposing structure—whether through fixed aggregations or learned message passing—can lift the structural limitations of row-local learning when combined with a strong tabular predictor.

\paragraph{Do tabular predictors use the learned structure?}
We analyze permutation-based feature importance. Across all hybrid models, GNN-derived embeddings dominate the top-ranked features: 7 of the top-15 for Tab+GNN(DB) and 9 for Tab+GNN(T). This shows that the tabular learner relies on message-passing representations even in the presence of strong row-local features (Appendix~\ref{app:feature-importance}).

\paragraph{Do different sources of structure lead to different predictions?}
Although Tab+GNN(T) and Tab+GNN(DB) achieve similar ROC--AUC scores, their predictions are not identical. Figure~\ref{fig:upset_predictions} shows that while the models agree on most instances, a non-trivial subset of predictions differs (125 out of 825, about 15\%). This reflects the distinct inductive biases induced by inter-row structure within the table versus relational structure across tables, with each representation proving beneficial for different subsets of samples.

\begin{figure}[t]
    \centering
    \includegraphics[width=0.85\linewidth]{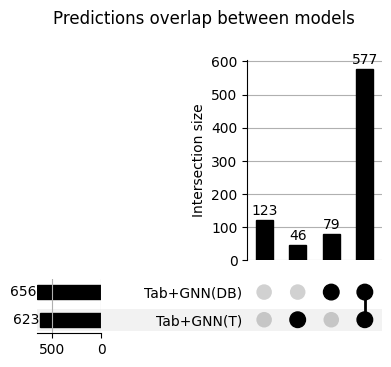}
    \caption{UpSet plot showing the overlap and disagreement predictions between model predictions in the test set. }
    \label{fig:upset_predictions}
\end{figure}

Taken together, these results answer (Q3) in the affirmative. While strong tabular models capture most of the predictive signal in \texttt{relbench-trial}, augmenting them with explicitly extracted inter-row and relational structure—via either fixed aggregations or learned message passing—yields consistent, non-redundant improvements. This demonstrates that tabular and graph-based representations are complementary rather than competing, and motivates hybrid, structure-aware tabular learning as a principled way to overcome row-local limitations.

%% file: tables/tab-tasks.tex
\begin{table*}[t]
\centering
\caption{Results on logical tasks across synthetic and transaction datasets. For each task we report metric AUC and F1. Cases where tabular models get a 0.0 F1-score correspond to cases where the models predict 0 for all instances.}
\small
\setlength{\tabcolsep}{6pt}
\begin{tabular}{lcccccccccc}
\toprule
\textbf{Model} &
\multicolumn{2}{c}{\textsc{UNIQUE}} &
\multicolumn{2}{c}{\textsc{COUNT}} &
\multicolumn{2}{c}{\textsc{COUNT}($=$)} &
\multicolumn{2}{c}{\textsc{DOUBLE}} &
\multicolumn{2}{c}{\textsc{DIAMOND}} \\
\cmidrule(lr){2-3}
\cmidrule(lr){4-5}
\cmidrule(lr){6-7}
\cmidrule(lr){8-9}
\cmidrule(lr){10-11}
& \textbf{AUC} & \textbf{F1-score}
& \textbf{AUC} & \textbf{F1-score}
& \textbf{AUC} & \textbf{F1-score}
& \textbf{AUC} & \textbf{F1-score}
& \textbf{AUC} & \textbf{F1-score} \\
\midrule

\multicolumn{11}{c}{\textbf{Transactions synthetic data}} \\
\midrule

LightGBM   & 0.566 & 0.123 & 0.889 & 0.000 & 0.675 & 0.314 & 0.789 & 0.729 & 0.846 & 0.682 \\
realMLP    & 0.516 & 0.123 & 0.933 & 0.733 & 0.521 & 0.000 & 0.785 & 0.686 & 0.796 & 0.603\\
\midrule
TabPFN2.5  & 0.823 & 0.025 & 0.962 & 0.775 & 0.340 & 0.055 & 0.670 & 0.699 & 0.868 & \bf 0.702 \\
\midrule
GNN (Sage)        & \bf 0.997 & \bf 0.986 & \bf 1.000 & \bf 1.000 & \bf 0.988 & \bf 0.859 & \bf 0.887 & \bf 0.777 & \bf 0.899 & 0.676\\
\midrule
\multicolumn{11}{c}{\textbf{Transactions real data (retail)}} \\
\midrule
LightGBM & 0.570 & 0.190 & 0.500 & 0.000 & 0.500 & 0.027 & 0.847 & 0.815 & 0.821 & 0.199 \\
RealMLP & 0.853 & 0.190 & 0.929 & 0.881 & 0.477 & 0.027 & 0.703 & 0.000 & 0.792 & 0.139 \\
\midrule
TabPFN2.5 & 0.793 & 0.404 & 0.827 & 0.001 & 0.758 & 0.023 & 0.463 & 0.003 & 0.659 & 0.007 \\
\midrule
GNN (Sage)& \bf0.998 & \bf0.988 & \bf 0.999 & \bf 0.967 & \bf 0.903 & \bf 0.171 & \bf 0.910 & \bf 0.863 & \bf 0.807 & \bf 0.251 \\


\bottomrule
\end{tabular}
\label{tab:logical-tasks}
\end{table*}

%% file: tables/tab-trial.tex
\begin{table}[h]
\centering
\caption{Performance comparison (AUC) on \texttt{relbench-trial}. Results from RDL, RGT and KumoRFM are taken from their respective papers. KumoRFM does not report validation scores. Results are averaged over 5 runs. }
\footnotesize
\begin{tabular}{lccc}
\toprule
\textbf{Model} &
\textbf{val-auc} &
\textbf{test-auc} \\
\midrule
LightGBM & 0.6830 & 0.7009  \\
RDL (GNN) & 0.6818  & 0.6860  \\
RGT (Graph Transformer)& 0.6678  & 0.6861  \\
KumoRFM & -- & 0.7079 \\
KumoRFM (finetuned) & -- & 0.7116\\
\midrule
GNN(T) & 0.6837 & 0.7151 \\
Tab+NFA(T) & 0.7024 & 0.7254 \\
Tab+GNN(T) & 0.7011 & 0.7241  \\
\midrule
Tab+GNN(DB) & 0.6936 & 0.7180  \\
Tab+GNN(T+DB) & \bf 0.7214 & \bf 0.7486 \\
\bottomrule
\end{tabular}
\label{tab:clinical_results}
\end{table}

%% file: sections/conclusions.tex
We examined the expressive limits of row-wise prediction in tabular learning and showed that treating rows independently rules out targets whose values depend on relationships between rows. Through controlled experiments on synthetic and real tabular and relational data, we demonstrated that these limitations manifest in practice. Models with access to explicit inter-row structure capture extension-sensitive dependencies that row-local predictors miss, while strong tabular models benefit when such structure is exposed in a form they can exploit. These results highlight that tabular and graph-based methods are complementary, not competing.

\textbf{Limitations.}
Introducing structure increases computational cost and is not universally beneficial, and different tasks may require different structural views that are not always evident \emph{a priori}. Our analysis focuses on a limited class of constructions and targets, and extending it to adaptive or end-to-end structure selection remains an open problem.
For clarity of exposition, both the theory and experiments are instantiated in a binary classification setting. This choice simplifies definitions and notation and is not a fundamental restriction: the proposed framework and expressiveness analysis extend to multiclass classification and regression.

%% file: sections/impact.tex
This paper presents work whose primary goal is to advance the understanding of representational limitations in tabular machine learning and to provide a principled framework for reasoning about when and why modeling inter-row structure is beneficial. The contributions are methodological and theoretical in nature, focusing on expressiveness, inductive bias, and experimental diagnosis rather than on deploying new predictive systems.

By clarifying when row-local tabular models are structurally misaligned with the prediction task, this work may help practitioners avoid inappropriate model choices in settings such as transactional, temporal, or relational data. This could have positive downstream effects by encouraging more reliable modeling practices in application domains where tabular data is prevalent, including science, healthcare, and industry. At the same time, the work does not introduce new data sources, new deployment scenarios, or new mechanisms for decision-making about individuals.

The techniques and analyses presented here do not raise novel ethical concerns beyond those commonly associated with machine learning research on tabular data. As with any modeling approach, improper use on sensitive data or in high-stakes decision-making contexts may carry risks, but these risks are not specific to the framework proposed in this paper.


%% file: appendix/appendix-A.tex
\newcommand{\Row}{\mathsf{Row}}
\newcommand{\Val}{\mathsf{Val}}

\section{Related Work}\label{sec:related}
\input{appendix/related_work}
\section{Background}
We denote \emph{multisets} by $\{\!\{\cdot\}\!\}$ and, for a set $X$, we write $\mathsf{mset}(X)$ for the collection of all finite multisets over $X$.

\paragraph{MPNNs and their induced node predictors}
Fix an input graph $G=(V,E_1,\ldots,E_m,\rho)$.
A \emph{$k$-layer (heterogenous) MPNN} \citep{Gil+2017,Schlichtkrull+2018}  maintains embeddings $\mathbf{h}_v^{(\ell)}\in\mathbb{R}^d$ for each node $v$ and each layer
$\ell\in\{0,1,\dots,k\}$.  Initialization is given by a learnable encoder
$\mathbf{h}_v^{(0)} = \mathsf{INI}(\rho(v),C(v))$.
For each layer $\ell=0,\ldots,k-1$ and edge type $i\in[m]$, it computes
\begin{align*}
\mathbf{m}_{v,i}^{(\ell)}
&= \mathsf{AGG}_i^{(\ell)}\Big(\big\{\!\!\big\{\ \mathsf{MSG}_i^{(\ell)}(\mathbf{h}_v^{(\ell)},\mathbf{h}_u^{(\ell)})
\ \big|\ u\in N_i(v)\ \big\}\!\!\big\}\Big),\\
\mathbf{h}_v^{(\ell+1)}
&= \mathsf{UPD}^{(\ell)}\big(\mathbf{h}_v^{(\ell)},\mathbf{m}_{v,1}^{(\ell)},\ldots,\mathbf{m}_{v,m}^{(\ell)}\big),
\end{align*}
where each $\mathsf{MSG}_i^{(\ell)}$ and $\mathsf{UPD}^{(\ell)}$ is a differentiable learnable map and each
$\mathsf{AGG}_i^{(\ell)}:\mathsf{mset}(\mathbb{R}^{d'})\to\mathbb{R}^{d'}$ is permutation-invariant.

A $k$-layer MPNN induced a (total)
\emph{node classifier} $\Predn^{\mathsf{MPNN}}$ by thresholding
the final readout embedding:
$$
\Predn_G^{\mathsf{MPNN}}:V\to\{0,1\}:v\mapsto \mathbb{I}\!\left[\ \mathsf{READ}\!\left(\mathbf{h}_v^{(k)}\right) > 0\ \right],
$$
where $\mathbb{I}[\cdot]\in\{0,1\}$ is the indicator function and $\mathsf{READ}:\mathbb{R}^{d}\to\mathbb{R}$ is a differentiable and learnable \emph{readout} function.

\label{appsec:background}
\paragraph{First-order logic}
We here define standard first-order logic over the \emph{graphs}
$G=(V,E_1,\ldots,E_m,\rho)$ from the preliminaries.

\smallskip
\noindent
$\blacktriangleright$ \emph{Signature.}
We recall that for a given graph $G=(V,E_1,\ldots,E_m,\rho)$, we use the binary relation symbols
$E_1,\ldots,E_m$ to form the \emph{edge signature} $\sigma_G = \{E_1,\ldots,E_m\}$. To access node features in logic, we fix a \emph{finite} set $\mathcal U_\rho$ of unary predicate symbols.
Each $U\in\mathcal U_\rho$ is intended to represent a Boolean test on the node annotation $(C(v),\rho(v))$
(e.g., a node-type indicator such as $C(v)=C_0$, or a threshold/equality test on a designated attribute when present).
We work over the expanded signature
$\sigma_G^\rho = \sigma_G \cup \mathcal U_\rho$.

For each  graph $G$, every $U\in\mathcal U_\rho$ is interpreted as a set $U^G\subseteq V$.
Concretely, one may view $\mathcal U_\rho$ as fixing a finite family of Boolean functions
$U:\{(C,\eta)\mid C\subseteq\mathcal A\text{ finite},\,\eta\in\dom^C\}\to\{0,1\}$ and set
\[
U^G = \{v\in V \mid U(C(v),\rho(v))=1\}.
\]

\smallskip
\noindent
$\blacktriangleright$ \emph{FO syntax.}
Let $\mathrm{Var}$ be a countable set of variables.
First-order formulas over $\sigma_G^\rho$ are generated by
\[
\varphi := (x=y)\ \mid\ U(x)\ \mid\ E_i(x,y)\ \mid\ \neg\varphi\ \mid\ (\varphi\wedge\psi)\ \mid\ \exists x\,\varphi,
\]
where $x,y\in\mathrm{Var}$, $U\in\mathcal U_\rho$, and $i\in[m]$.
We use the usual abbreviations for $\vee,\to,\forall$.

\smallskip
\noindent
$\blacktriangleright$ \emph{Free variables.}
The set $\mathrm{fv}(\varphi)\subseteq \mathrm{Var}$ of \emph{free variables} of an FO formula $\varphi$ over $\sigma_G^\rho$
is defined inductively by
\begin{align*}
\mathrm{fv}(x=y) &= \{x,y\},\\
\mathrm{fv}(U(x)) &= \{x\},\\
\mathrm{fv}(E_i(x,y)) &= \{x,y\},\\
\mathrm{fv}(\neg\varphi) &= \mathrm{fv}(\varphi),\\
\mathrm{fv}(\varphi\wedge\psi) &= \mathrm{fv}(\varphi)\cup \mathrm{fv}(\psi),\\
\mathrm{fv}(\exists x\,\varphi) &= \mathrm{fv}(\varphi)\setminus\{x\}.
\end{align*}
A formula $\varphi$ is \emph{(FO-)unary} if $\mathrm{fv}(\varphi)\subseteq\{x\}$ for some variable $x$; in this case we often write $\varphi(x)$.

\smallskip
\noindent
$\blacktriangleright$ \emph{FO semantics.}
A \emph{variable assignment} is a map $\alpha:\mathrm{Var}\to V$.
Satisfaction $G,\alpha\models\varphi$ is defined inductively:
\begin{align*}
G,\alpha\models (x=y) &\iff \alpha(x)=\alpha(y),\\
G,\alpha\models U(x) &\iff \alpha(x)\in U^G,\\
G,\alpha\models E_i(x,y) &\iff (\alpha(x),\alpha(y))\in E_i,\\
G,\alpha\models \neg\varphi &\iff \text{not }(G,\alpha\models\varphi),\\
G,\alpha\models (\varphi\wedge\psi) &\iff (G,\alpha\models\varphi)\ \text{and}\ (G,\alpha\models\psi),\\
G,\alpha\models \exists x\,\varphi &\iff \exists v\in V:\ G,\alpha[x\mapsto v]\models \varphi,
\end{align*}
where $\alpha[x\mapsto v]$ is $\alpha$ updated to map $x$ to $v$.
If $\varphi(x)$ has exactly one free variable $x$, we write $G\models \varphi(v)$ to mean
$G,\alpha\models\varphi$ for any (equivalently, every) assignment $\alpha$ with $\alpha(x)=v$.

\paragraph{Graded modal logic}
We also introduce the graded modal logic (GML) fragment of FO.

\smallskip
\noindent
$\blacktriangleright$ \emph{Counting quantifiers.}
For $N\in\mathbb N$, we write $\exists^{\ge N}y\,\theta(y)$ as an abbreviation for the FO formula expressing
that at least $N$ \emph{distinct} witnesses satisfy $\theta$:
\[
\exists y_1\cdots \exists y_N\Bigl(\bigwedge_{p\neq q} y_p\neq y_q\ \wedge\ \bigwedge_{p=1}^N \theta(y_p)\Bigr).
\]
Equivalently, for any assignment $\alpha$,
$
G,\alpha\models \exists^{\ge N}y\,\theta(y)
$ if and only if 
\[\bigl|\{v\in V \mid G,\alpha[y\mapsto v]\models \theta(y)\}\bigr|\ \ge\ N.
\]

\smallskip
\noindent
$\blacktriangleright$ \emph{GML syntax and semantics.}
The \emph{graded modal logic} (GML) fragment over $\sigma_G^\rho$ consists of unary formulas $\varphi(x)$ generated by 
$\varphi(x)\!:=$
\[
U(x)\ \mid\ \neg\varphi(x)\ \mid\ (\varphi(x)\wedge\psi(x))\ \mid\
\exists^{\ge N}y\,\bigl(E_i(x,y)\wedge \varphi(y)\bigr),
\]
where $U\in\mathcal U_\rho$, $i\in[m]$, and $N\in\mathbb N$. Clearly, GML formulas
inherit the semantics of FO formulas and the counting quantifiers given earlier.

\smallskip
\noindent
$\blacktriangleright$ \emph{Modal depth.}
The modal depth $\mathrm{md}(\varphi)$ of a GML formula is defined by
$\mathrm{md}(U(x)) = 0$,
$\mathrm{md}(\neg\varphi) = \mathrm{md}(\varphi)$,$\mathrm{md}(\varphi\wedge\psi) = \max\{\mathrm{md}(\varphi),\mathrm{md}(\psi)\}$, and
$$\mathrm{md}\!\left(\exists^{\ge N}y\,(E_i(x,y)\wedge \varphi(y))\right) = 1+\mathrm{md}(\varphi).$$
We write $\mathrm{GML}^{(\le k)}$ for the set of GML formulas of modal depth at most $k$.

\section{Model-induced grables for CARTE, TARTE, TabPFN, and NFA}
\label{appsec:grables}
Here we instantiate graph constructors for three prominent tabular learning methods by making their \emph{tokenization} and \emph{connectivity} explicit as a grable. We also 
formulate Neighbourhood Feature Augmentation as a grable.

Throughout, let $C=\{c_1,\dots,c_m\}$ be a schema and $T\subseteq\dom^C$ a $C$-table.
We use $\rho$ for node features, and we allow edge features/labels (e.g.\ column-name embeddings) as part of the graph representation.

\subsection{CARTE}
\label{app:carte_grable}

CARTE \citep{kim2024carte} represents each row as a small multi-relational \emph{graphlet}: a star-shaped graph with a central row token and one leaf per non-missing cell.
We formalize this as a constructor that produces a single grable for the whole table by taking the disjoint union of row graphlets.

\begin{example}[CARTE grable]
Let $C=\{c_1,\dots,c_m\}$ and $T$ be a $C$-table.
The \emph{CARTE grable} of $(C,T)$ is the  graph
\[
G^{\mathrm{CARTE}}_{C,T}=\bigl(V,\;E_1,\dots,E_m,\;\rho,\;\eta\bigr)
\]
defined as follows. We allow also for an edge feature map $\eta$ here.
For each row $r\in T$, let $I(r)=\{\,i\in[m]\mid r[c_i]\neq\mathsf{NaN}\,\}$ and create a \emph{row center} node $v_r$ and one \emph{cell} node $c_{r,i}$ for each $i\in I(r)$.
Thus
\[
V=\{v_r:r\in T\}\ \cup\ \{c_{r,i}: r\in T,\ i\in I(r)\}.
\]
For each $i\in[m]$, the $i$-th edge relation is
\[
E_i=\{(v_r,c_{r,i}) : r\in T,\ i\in I(r)\}
\quad (\text{optionally also } (c_{r,i},v_r)).
\]
Node features encode values (optionally conditioned on the column):
\[
\rho(c_{r,i})=\phi(c_i,r[c_i]),\qquad \rho(v_r)=r,
\]
for any fixed featurizer $\phi$.
Edge features carry column semantics via
\[
\eta\bigl((v_r,c_{r,i})\bigr)=\mathrm{LM}(c_i),
\]
for some language model embedding LM.
We write $\gamma_{\mathrm{CARTE}}$ for the corresponding constructor.\hfill$\blacktriangleleft$
\end{example}

\subsection{TARTE}
\label{app:tarte_grable}

TARTE \citep{kim2025table} is a transformer-style tabular backbone that forms representations from \emph{column--cell pairs}: each entry is interpreted in the context of its column name, and the model aggregates information across a row via attention to produce row-level predictions.

\begin{example}[TARTE grable]
Let $C=\{c_1,\dots,c_m\}$ and $T$ be a $C$-table.
The \emph{TARTE grable} of $(C,T)$ is the  graph
\[
G^{\mathrm{TARTE}}_{C,T}=\bigl(V,\;E_{\mathrm{attn}},\;E_{\mathrm{row}},\;\rho\bigr)
\]
defined as follows.
The node set consists of row nodes and column--cell token nodes:
\[
V=\{v_r:r\in T\}\ \cup\ \{t_{r,i}: r\in T,\ i\in[m]\}.
\]
Token features encode the column name together with the cell value,
\[
\rho(t_{r,i})=\phi(c_i,r[c_i]),
\qquad
\rho(v_r)=r,
\]
for some fixed featurizer $\phi$.
Edges encode (permissible) attention within each row:
\[
E_{\mathrm{attn}}=\{(t_{r,i},t_{r,j}) : r\in T,\ i,j\in[m]\}.
\]
We also connect each row node to its tokens via
\[
E_{\mathrm{row}}=\{(v_r,t_{r,i})\mid r\in T,\ i\in[m]\}
\quad(\text{optionally also }(t_{r,i},v_r)).
\]
We write $\gamma_{\mathrm{TARTE}}$ for the corresponding constructor.\hfill$\blacktriangleleft$
\end{example}

\subsection{TabPFN}
\label{app:tabpfn_grable}

TabPFN \citep{tabpfn_nature} consumes an entire table as a single set of tokens.
A useful graph-level view is that each transformer block restricts self-attention to either
(i) tokens within the same \emph{row} (feature interactions) or
(ii) tokens within the same \emph{column} (cross-row information exchange for a fixed feature).
Thus, TabPFN can be seen as message passing on a  graph whose edges encode permissible attention neighborhoods, together with an alternation schedule over edge types across layers.

\begin{example}[TabPFN grable]
Let $C=\{c_1,\dots,c_m\}$ and $T$ be a $C$-table.
The \emph{TabPFN grable} of $(C,T)$ is the graph
\[
G^{\mathrm{TabPFN}}_{C,T} \;=\; (V,\;E^{\mathrm{row}},\;E^{\mathrm{col}},\;\rho),
\]
with $V = V_{\mathrm{cell}}\cup V_{\mathrm{row}}$, where
\[
V_{\mathrm{cell}} = \{c_{r,i}\mid r\in T,\ i\in[m]\},
\qquad
V_{\mathrm{row}} = \{v_r\mid r\in T\}.
\]
Row- and column-wise attention relations are
\begin{align*}
E^{\mathrm{row}} &= \{(c_{r,i},c_{r,j})\mid r\in T,\ i,j\in[m]\}
\ \cup\ {}\\
& \hspace{1cm}\{(v_r,c_{r,i}),(c_{r,i},v_r)\mid r\in T,\ i\in[m]\},\\
E^{\mathrm{col}} &= \{(c_{r,i},c_{s,i})\mid r,s\in T,\ i\in[m]\}.
\end{align*}
Node features are initialized by
\[
\rho(c_{r,i}) = \psi(c_i,\,r[c_i]),\qquad
\rho(v_r) = r,
\]
for some featurizer $\psi$. 
We write $\gamma_{\mathrm{TabPFN}}$ for the corresponding constructor.\hfill$\blacktriangleleft$
\end{example}

\subsection{NFA}
\label{app:nfa_grable}

Neighborhood Feature Aggregation (NFA) \citep{bazhenov2025GraphLand} is a \emph{derived} constructor: it requires a starting grable $G^\gamma_{C,T}$ to define $1$-hop neighborhoods, but its output is a \emph{trivial grable} so that downstream learners can be purely tabular.

\begin{example}[NFA grable (trivial output)]
Fix a base constructor $\gamma$ and let
\[
G^\gamma_{C,T}=(V,\;E_1,\dots,E_m,\;\rho)
\]
be its grable on $(C,T)$ with row nodes $V_{\mathrm{row}}=\{v_r\mid r\in T\}$.
Write $E=\bigcup_{i=1}^m E_i$ and $N(v)=\{u\in V_{\mathrm{row}}\mid (v,u)\in E\}$ (row-node neighbors).

Define the NFA augmentation for a row node $v\in V_{\mathrm{row}}$ as follows.
For any scalar feature coordinate $z$ (numerical, or a one-hot indicator derived from a categorical feature),
let
\[
S_z(v)=\{\, z(u)\mid u\in N(v),\ z(u)\neq \mathsf{NaN}\,\}.
\]
Then $\mathrm{NFA}(v)$ concatenates:
(i) for each numerical coordinate $z$: $\mathrm{mean}(S_z(v))\Vert \min(S_z(v))\Vert \max(S_z(v))$;
(ii) for each one-hot indicator $z$ of each categorical feature: $\mathrm{mean}(S_z(v))$.

Set the enriched row features to
\[
\rho^{\mathrm{NFA}}(v)\ =\ \rho(v)\ \Vert\ \mathrm{NFA}(v)\qquad (v\in V_{\mathrm{row}}).
\]
Finally, the \emph{NFA grable} is the \emph{trivial} grable on row nodes with these features:
\[
G^{\mathrm{NFA}}_{C,T}\ =\ \bigl(V_{\mathrm{row}},\;\emptyset,\;\rho^{\mathrm{NFA}}\bigr).
\]
We write $\gamma_{\mathrm{NFA}}(\gamma)$ for the resulting constructor.\hfill$\blacktriangleleft$
\end{example}

\section{Grabular expressiveness}\label{appsec:results}

\subsection{Proof of Proposition~\ref{lem:mpnn-triv-rowwise}}
Fix $k\in\mathbb N$. We need to show that for any $k$-layer MPNN node classifier on $G^{\mathrm{triv}}_{C,T}$, there exists a function $F:\dom^C\to\{0,1\}$ (determined by the learned parameters) such that for all $r\in T$, \[ \Predn^{\mathrm{MPNN}}_{G^{\mathrm{triv}}_{C,T}}(v_r)=F(r). \]

Indeed, by definition of the trivial constructor, $G^{\mathrm{triv}}_{C,T}$ has no edges of any type, so for every row node $v_r$ and every layer $\ell$ the neighbour multiset used by the aggregator is empty. Hence the aggregated message at layer $\ell$ is the same constant for all nodes:
\[
\mathbf{m}^{(\ell)}(v_r)\;=\;\mathsf{Agg}^{(\ell)}(\emptyset)\;=:\mathbf{c}_\ell,
\]
which does not depend on $T$ or on $r$.
We can now write the embeddings computed by the MPNN as
\[
\mathbf h^{(0)}_{v_r}=g(r),\qquad
\mathbf h^{(\ell+1)}_{v_r}=\mathsf{Upd}^{(\ell)}\!\bigl(\mathbf h^{(\ell)}_{v_r},\, \mathbf{m}^{(\ell)}(v_r)\bigr)
\quad(\ell=0,\dots,k-1),
\]
and final prediction
\[
\Predn^{\mathrm{MPNN}}_{G^{\mathrm{triv}}_{C,T}}(v_r)=\mathsf{Read}\!\bigl(\mathbf h^{(k)}_{v_r}\bigr).
\]
Substituting $\mathbf{m}^{(\ell)}(v_r)=\mathbf{c}_\ell$ yields
\[
\mathbf h^{(\ell+1)}_{v_r}=\mathsf{Upd}^{(\ell)}\!\bigl(\mathbf h^{(\ell)}_{v_r},\, \mathbf{c}_\ell\bigr),
\]
so the evolution of $\mathbf h_{v_r}^{(\ell)}$ depends only on the initial feature $g(r)$.

Define functions $g_\ell:\dom^C\to\mathbb R^d$ recursively by
\[
g_0(r):=g(r),\qquad g_{\ell+1}(r):=\mathsf{Upd}^{(\ell)}\!\bigl(g_\ell(r),\,\mathbf{c}_\ell\bigr).
\]
Then by induction on $\ell$ we have $\mathbf h^{(\ell)}_{v_r}=g_\ell(r)$ for all $\ell\le k$, and in particular
$\mathbf h^{(k)}_{v_r}=g_k(r)$.
Now define the row-wise classifier
\[
F:\dom^C\to\{0,1\},\qquad F(r):=\mathsf{Read}\!\bigl(g_k(r)\bigr).
\]
Therefore, for every $r\in T$,
\[
\Predn^{\mathrm{MPNN}}_{G^{\mathrm{triv}}_{C,T}}(v_r)
=\mathsf{Read}\!\bigl(\mathbf h^{(k)}_{v_r}\bigr)
=\mathsf{Read}\!\bigl(g_k(r)\bigr)
=F(r),
\]
as claimed.

\subsection{Proof of Proposition~\ref{prop:inc-not-rowlocal}}
We need to show that each predictor in Figure~\ref{fig:logical-tasks-incidence} is not row-local. We exhibit, for each task, an instance where the label of a fixed row $r$ changes after adding rows (while keeping $r$ itself unchanged), hence the predictor is not row-local.

\smallskip
\noindent\textsc{Unique.}
Let $T=\{r\}$ where $r[a_i]=v$ for some column $a_i$. Then $v$ appears only in $r$, so \textsc{Unique}$(r)=1$.
Let $T':=\{r,s\}$ where $s[a_i]=v$ and $s$ agrees with $r$ on $a_i$ (other columns arbitrary). Then $v$ is no longer unique to $r$, so \textsc{Unique}$(r)=0$ in $T'$.

\smallskip
\noindent\textsc{Count}$_N$.
Let $T=\{r\}$, so there are $0$ other rows sharing any value with $r$, hence \textsc{Count}$_N(r)=0$.
Add $N$ fresh rows $s_1,\dots,s_N$ with $s_j[a_i]=r[a_i]$ for the designated column $a_i$ (others arbitrary). In the enlarged table $T'$ we have at least $N$ other rows sharing that value with $r$, hence \textsc{Count}$_N(r)=1$.

\smallskip
\noindent\textsc{Double.}
Pick $T=\{r\}$ so no witness row exists and \textsc{Double}$(r)=0$.
Add a row $s$ that shares the designated value with $r$ and satisfies the required local constraint (unary predicate on $s$'s own features). Then \textsc{Double}$(r)=1$ in $T':=\{r,s\}$.

\smallskip
\noindent\textsc{Diamond.}
Pick $T=\{r\}$ so no other row exists and \textsc{Diamond}$(r)=0$.
Add a row $s$ such that $s[a_i]=r[a_i]$ and $s[a_j]=r[a_j]$ for two designated columns $a_i\neq a_j$. Then $s$ witnesses the required double-overlap and \textsc{Diamond}$(r)=1$ in $T':=\{r,s\}$.

\smallskip
In all four cases, $\Pred_{C,T}(r)\neq \Pred_{C,T'}(r)$ for some $T\subseteq T'$, so none of these predictors is row-local. Equivalently, none lies in $\mathsf{Expr}(\{\gamma_{\mathrm{triv}}\},\mathcal P)$ for any $\mathcal P$.

\subsection{Proof of Proposition~\ref{prop:inc-mpnn-modal}}
Let $\mathcal P_{\mathrm{MPNN}}^{(k)}$ denote $k$-layer MPNNs. We need to show that
\begin{compactitem}
\item $\textsc{Unique},\ \textsc{Count}_N\ \in\ \mathsf{Expr}(\{\gamma_{\mathrm{inc}}\},\mathcal P_{\mathrm{MPNN}}^{(2)})$,
\item $\textsc{Double}\ \in\ \mathsf{Expr}(\{\gamma_{\mathrm{inc}}\},\mathcal P_{\mathrm{MPNN}}^{(3)})$.
\end{compactitem}

We write the three targets as $\mathrm{GML}$ formulas using the counting modality
$\exists^{\ge N}y\,(E_i(x,y)\wedge \varphi(y))$ from the grammar.
Let $\Row(\cdot)$ and $\Val(\cdot)$ be unary predicates distinguishing row and value nodes in $\gamma_{\mathrm{inc}}$
(and let $P(\cdot)\in\mathcal U_\rho$ be the unary “local condition” used in \textsc{Double}).

\smallskip
\noindent\textsc{Unique.}
Fix the intended meaning: $r$ has some column--value neighbor that is incident to no other row.
For each column type $i$, define
\[
\psi_{\mathrm{uniq},i}(x)\ :=\
\exists^{\ge 1}y\Bigl(E_i(x,y)\wedge \Val(y)\wedge \neg\exists^{\ge 2}z\bigl(E_i(y,z)\wedge \Row(z)\bigr)\Bigr),
\]
and let $\psi_{\mathrm{uniq}}(x):=\bigvee_i \psi_{\mathrm{uniq},i}(x)$.
At a row node $x=v_r$, the inner negated counting condition says that the chosen value node $y=u_{i,v}$
has fewer than two adjacent row nodes via $E_i$, i.e.\ it is adjacent to exactly one row (namely $v_r$).
Thus $\psi_{\mathrm{uniq}}$ defines \textsc{Unique}. Its modal depth is $2$.

\smallskip
\noindent\textsc{Count}$_N$.
Assume the designated column type is $i$ (otherwise disjoin over the permitted choices).
Define
\[
\psi_{\mathrm{cnt},N}(x)\ :=\
\exists^{\ge 1}y\Bigl(E_i(x,y)\wedge \Val(y)\wedge \exists^{\ge (N+1)}z\bigl(E_i(y,z)\wedge \Row(z)\bigr)\Bigr).
\]
At $x=v_r$, this picks a value node $y=u_{i,v}$ adjacent to $r$ whose $E_i$-row-neighborhood has size at least $N+1$,
i.e.\ at least $N$ \emph{other} rows share that value with $r$. This has modal depth $2$.

\smallskip
\noindent\textsc{Double.}
Assume the task is witnessed by a length-$3$ pattern
$v_r - u_{i,v} - v_s - u_{j,w}$ where $v_s$ satisfies the unary predicate $P$.
Define
\[
\psi_{\mathrm{dbl}}(x)\ :=\
\exists^{\ge 1}y\Bigl(E_i(x,y)\wedge \Val(y)\wedge
   \exists^{\ge 1}s\bigl(E_i(y,s)\wedge \Row(s)\wedge P(s)\wedge \exists^{\ge 1}w(E_j(s,w)\wedge \Val(w))\bigr)\Bigr).
\]
Starting from a row node $x=v_r$, this moves to an $i$-value node $y$, then to an $i$-adjacent row $s$ satisfying $P$,
and finally to a $j$-value node $w$, exactly as required. This has modal depth $3$.

\smallskip
Hence \textsc{Unique} and \textsc{Count}$_N$ are definable in $\mathrm{GML}^{(\le 2)}$ and \textsc{Double} in $\mathrm{GML}^{(\le 3)}$
on $\gamma_{\mathrm{inc}}$.
By Theorem~\ref{thm:fo-mpnn-gml}, these properties are realized by $2$-layer (resp.\ $3$-layer) MPNNs, giving the claimed
membership in $\mathsf{Expr}(\{\gamma_{\mathrm{inc}}\},\mathcal P_{\mathrm{MPNN}}^{(k)})$.

\subsection{Proof of Proposition~\ref{prop:inc-fo-strict}}
We need to show that for every $k\in\mathbb N$,
\[
\textsc{Diamond}\notin \mathsf{Expr}(\{\gamma_{\mathrm{inc}}\},\mathcal P_{\mathrm{MPNN}}^{(k)}).
\]
\begin{proof}[Proof sketch]
Fix $k\in\mathbb N$. We construct two tables $T$ and $T'$ over a schema containing
two designated columns $c_i\neq c_j$ (and, if needed, additional fresh columns to realize fillers).
Let $r$ be the distinguished row in both tables, with $r[c_i]=\alpha$ and $r[c_j]=\beta$.

In table $T$, include, besides $r$, a single row $s$ with $s[c_i]=\alpha$ and $s[c_j]=\beta$.
Then in the incidence graph $G=G^{\gamma_{\mathrm{inc}}}_{C,T}$,
the row node $v_r$ has a diamond witness $v_s$ via the value nodes $u_{i,\alpha}$ and $u_{j,\beta}$, so
\textsc{Diamond}$(r)=1$.

In table $T'$, replace $s$ by two rows $s_i,s_j$ where $s_i[c_i]=\alpha$ but $s_i[c_j]\neq \beta$, and
$s_j[c_j]=\beta$ but $s_j[c_i]\neq \alpha$ (choose fresh values for the other entries).
Then in $H:=G^{\gamma_{\mathrm{inc}}}_{C,T'}$ there is no single row that overlaps with $r$ on both
$c_i$ and $c_j$, so \textsc{Diamond}$(r)=0$.

We now make $G$ and $H$ $k$-GML-indistinguishable. More specifically, we 
add the same multiset of \emph{filler} rows to both tables so that every value node that occurs within distance $\le k$
from $v_r$ has the same number of incident row neighbors of each unary feature-type in $G$ and in $H$. Concretely, for each value node whose degree differs between the two constructions (initially $u_{i,\alpha}$ and $u_{j,\beta}$),
add enough fresh rows using fresh values in other columns so as to equalize all relevant degrees and local unary predicates.
This equalization can be done independently for each such value node because incidence graphs are bipartite and edges are typed by columns.

After this padding, the pointed incidence graphs $(G,v_r)$ and $(H,v_r)$ are $k$-round counting-bisimilar, and therefore satisfy the same $\mathrm{GML}^{(\le k)}$ formulas.

The two tables yield incidence graphs indistinguishable by $\mathrm{GML}^{(\le k)}$ at $v_r$, yet they disagree on \textsc{Diamond}.
Hence \textsc{Diamond} is not definable in $\mathrm{GML}^{(\le k)}$ for any $k$.
Since \textsc{Diamond} is FO-definable on $\gamma_{\mathrm{inc}}$ but not in $\mathrm{GML}^{(\le k)}$,
Theorem~\ref{thm:fo-mpnn-gml} implies that no $k$-layer MPNN realizes \textsc{Diamond} on $\gamma_{\mathrm{inc}}$.
\end{proof}

\subsection{Extended incidence graph}
We show that, by using an extended incidence constructor we can express 
\textsc{Diamond} using MPNNs. Indeed, fix two designated columns $c_i\neq c_j$.
Define an extended constructor $\gamma_{\mathrm{inc}}^{(2)}$ as follows.
Starting from the incidence grable $G^{\gamma_{\mathrm{inc}}}_{C,T}$, add a new node $p_{\alpha,\beta}$
for every pair $(\alpha,\beta)$ that occurs in columns $(c_i,c_j)$ of some row in $T$, and add a new edge type $E_{ij}$
such that
\[
(v_r,p_{\alpha,\beta})\in E_{ij}
\quad\Longleftrightarrow\quad
r[a_i]=\alpha\ \wedge\ r[a_j]=\beta.
\]
\newcommand{\Pair}{\mathsf{Pair}}
Let $\Pair(\cdot)\in\mathcal U_\rho$ be a unary predicate true exactly on these pair nodes, and this predicate is now available for the logic GML.

On $G^{\gamma_{\mathrm{inc}}^{(2)}}_{C,T}$, the task \textsc{Diamond} is defined by the $\mathrm{GML}$ formula
\[
\psi_{\Diamond}(x)\ :=\
\exists^{\ge 1}p\Bigl(E_{ij}(x,p)\wedge \Pair(p)\wedge
\exists^{\ge 2}y\bigl(E_{ij}(p,y)\wedge \Row(y)\bigr)\Bigr).
\]
Indeed, for a row node $x=v_r$, the outer quantifier selects the pair node
$p_{r[a_i],\,r[a_j]}$ representing the joint value $(r[a_i],r[a_j])$.
The inner counting condition requires that this pair node be incident (via $E_{ij}$) to at least two row nodes,
namely $v_r$ and some other $v_s$, which is equivalent to the existence of a single other row $s\neq r$
sharing \emph{both} values with $r$.

Thus $\psi_{\Diamond}\in \mathrm{GML}^{(\le 2)}$, and by Theorem~\ref{thm:fo-mpnn-gml} it follows that
\[
\textsc{Diamond}\in \mathsf{Expr}(\{\gamma_{\mathrm{inc}}^{(2)}\},\mathcal P_{\mathrm{MPNN}}^{(2)}).
\]
This is again a nice  illustration of the constructor/grable formalism: a minimal change to the constructor (turning a two-hop conjunction into a single edge type) immediately collapses an FO-only pattern into shallow GML,
and hence into constant-depth message passing.

\input{appendix/appendix_tabular-nfa}

\input{appendix/appendix-datasets}

%% file: appendix/related_work.tex
\paragraph{Tabular learning.}
Learning on tabular data has been dominated by a set of highly effective model classes, including gradient-boosted decision trees (GBDTs) such as LightGBM, XGBoost, and CatBoost~\cite{lightgbm_neurips,xgboost,catboost}, as well as carefully tuned multilayer perceptrons designed for tabular inputs, including RealMLP and TabM~\cite{realmlp_neurips,gorishniy2025tabm}. These models form the basis of many practical systems and are commonly paired with AutoML tools for hyperparameter tuning, model selection, and ensembling~\cite{h2o,oboe,erickson2020autogluon}.

A shared characteristic of these approaches is that predictions are defined at the level of individual rows, using only the attributes of the row itself. 
This modeling choice aligns naturally with i.i.d.\ benchmark formulations and has proven effective across a wide range of applications. This alignment is reinforced by common tabular benchmark protocols, which typically rely on random splits and implicitly assume i.i.d.\ data, even when this assumption may not strictly hold in practice~\citep{erickson2025tabarena}.

Our work departs from this setting by constructing prediction tasks whose targets depend on relationships between rows. Rather than modifying the underlying tabular architectures, we use these tasks to systematically probe how standard tabular models behave when the assumptions under which they are typically evaluated no longer hold, and to motivate the need for representations that can access inter-row information.

\paragraph{Transformer-based models for tabular data.}
A parallel line of work adapts transformer architectures to tabular inputs, motivated by the success of attention-based models in natural language processing. Models such as FT-Transformer~\cite{ft-transformer} and TabTransformer~\cite{tabtransformer} primarily apply attention over features within a single row, and thus remain row-local at inference time. More broadly, transformer-based tabular models typically rely on serializing tables into token sequences and injecting structural bias through positional embeddings, masking schemes, or token-type indicators, rather than through explicit relational structure.

Recent surveys~\citep{transformers4tabdata} provide a comprehensive taxonomy of these design choices, covering table serialization strategies, positional encodings, and architectural adaptations. While such models may implicitly capture statistical regularities across rows through attention, the induced notion of structure depends on tokenization and architectural details and is not explicitly represented. As a result, it is difficult to characterize which inter-row dependencies these models can or cannot represent in principle, or to disentangle genuine relational reasoning from effects of model capacity or training data scale. Our work is complementary to this literature in that we introduce an explicit abstraction separating table-to-structure construction from prediction, enabling a principled analysis of when and why access to inter-row structure matters.

\paragraph{Tabular foundation models.}
Recent tabular foundation models such as TabPFN~\citep{tabpfn_nature}, TabICL~\citep{qu2025tabicl}, and TabDPT~\citep{ma2025tabdpt}
leverage attention mechanisms to enable interactions between rows at inference time, thereby relaxing the
row-locality assumption that underlies most traditional tabular predictors. The primary motivation of this line of work is to provide zero- or few-shot, ready-to-use models that generalize across a wide range of tabular datasets, rather than to explicitly study or characterize the structural limitations induced by row-local representations%
\footnote{Not all tabular foundation models aim to introduce cross-row structure; for instance, CARTE~\citep{kim2024carte} represents each row independently and focuses on transfer across heterogeneous tables, and TARTE~\citep{kim2025table} similarly produces row-level representations through semantic pre-training on large knowledge bases. This further illustrates that lifting row-locality is not the defining objective of this line of work.}
The link and difference between this line of work and ours is that we explicitly focus on inter-row structure and its representational consequences. Accordingly, we include in our experiments a representative tabular foundation model, TabPFN, which explicitly incorporates cross-row attention, and evaluate its behavior on tasks that require reasoning beyond row locality.

\paragraph{Graph learning over tabular data.}
Several works apply graph learning to tabular data by first transforming tables into graphs and then using message-passing models on the resulting structure. A recent survey~\cite{gnn4td} systematizes these approaches and categorizes table-to-graph constructions, including graphs where rows are nodes connected by similarity or temporal relations, feature- or feature-value-centric constructions, and heterogeneous or bipartite graphs linking rows and attributes. Related work in relational deep learning extends this paradigm to multi-table databases, where rows correspond to entities in a relational entity graph connected via primary-foreign key relationships and other schema-defined links~\cite{RDL_position}. Recent work on task-aware graph construction emphasizes that this lifting step is not neutral: the particular choice of graph representation can fundamentally determine which dependencies are exposed to the learner and which are obscured, amplifying or suppressing the effectiveness of GNNs depending on how well the induced structure aligns with the target task~\cite{cucumides2025augraph}. Despite these insights, the choice of graph construction in much of this literature is still treated primarily as a modeling or engineering decision, with emphasis placed on empirical performance rather than on formally characterizing the representational consequences of lifting tables to graphs. In contrast, we do not propose new graph learning methods or graph constructions; instead, we provide a formalization of the graph-to-table process and introduce a framework and vocabulary for studying expressiveness, clarifying how different graph constructions delimit the class of tabular prediction functions that can be represented.

\paragraph{Tabular models on graph-structured benchmarks.}
Recent work has shown that strong tabular learners can be competitive on graph benchmarks when augmented with explicit feature constructions that expose local relational or neighborhood information, such as fixed aggregations or structural summaries \cite{bazhenov2025GraphLand, FaF}. These results suggest that much of the empirical gap between tabular and graph-based models can be attributed to access to inter-node or inter-row information—often shallow—rather than to architectural differences alone, with tabular models benefiting from their robustness, stable optimization, and ability to handle heterogeneous features once such information is made explicit.

This perspective is related to our case study, but differs from our work in both object and objective. Prior approaches based on neighbor feature aggregation or fixed aggregation features are developed for graph datasets and are primarily used as diagnostic tools for graph benchmarks, where tabular learners serve as “must-beat” baselines \cite{relbench_neurips, bazhenov2025GraphLand}: when they match or outperform graph neural networks, the conclusion is that the benchmark does not structurally require learned message passing. By contrast, our focus is on tabular datasets, where relational structure is implicit rather than given. Instead of using tabular models to evaluate the adequacy of graph benchmarks, we study the structural limitations of row-local tabular learning itself, and show that explicitly exposing inter-row structure can complement strong tabular representations even in purely tabular settings.

%% file: appendix/appendix_tabular-nfa.tex
\section{Time-Aware NFA}
\label{app:tab-ta-nfa}

We describe a \emph{time-aware} variant of Neighbourhood Feature Aggregation (NFA), which is a
specialization of the construction introduced in Appendix~C.4.
The purpose of this variant is to enforce temporal causality in transactional and sequential
settings, while preserving the representation format and notation of NFA.

\begin{example}[Time-aware NFA grable]
Fix a base constructor $\gamma$ and let
\[
G^\gamma_{C,T}=(V,\;E_1,\dots,E_m,\;\rho)
\]
be its grable on $(C,T)$ with row nodes
$V_{\mathrm{row}}=\{v_r\mid r\in T\}$, as in Appendix~C.4.
Assume that each row $r\in T$ is associated with a timestamp $t(r)\in\mathbb{R}$,
stored as one of the row attributes.

Write $E=\bigcup_{i=1}^m E_i$ and $N(v)=\{u\in V_{\mathrm{row}}\mid (v,u)\in E\}$.
For a fixed time window $W>0$, define the \emph{time-aware neighborhood} of a row node $v\in V_{\mathrm{row}}$ as
\[
N_W(v)
\;=\;
\{\, u\in N(v)\mid t(u)<t(v)\ \wedge\ t(v)-t(u)\le W \,\}.
\]

The time-aware NFA augmentation for a row node $v\in V_{\mathrm{row}}$ is defined exactly as in
Appendix~C.4, except that all aggregations are computed over $N_W(v)$ instead of $N(v)$.
Specifically, for any scalar feature coordinate $z$ (numerical, or a one-hot indicator derived from
a categorical feature), let
\[
S^{(W)}_z(v)
=
\{\, z(u)\mid u\in N_W(v),\ z(u)\neq \mathsf{NaN}\,\}.
\]
Then $\mathrm{NFA}_W(v)$ concatenates:
(i) for each numerical coordinate $z$:
$\mathrm{mean}(S^{(W)}_z(v))\Vert \min(S^{(W)}_z(v))\Vert \max(S^{(W)}_z(v))$;
(ii) for each one-hot indicator $z$ of each categorical feature:
$\mathrm{mean}(S^{(W)}_z(v))$.

The enriched row features are given by
\[
\rho^{\mathrm{NFA}(W)}(v)
\;=\;
\rho(v)\ \Vert\ \mathrm{NFA}_W(v)
\qquad (v\in V_{\mathrm{row}}).
\]
Finally, the \emph{time-aware NFA grable} is the \emph{trivial} grable on row nodes with these features:
\[
G^{\mathrm{NFA}(W)}_{C,T}
\;=\;
\bigl(V_{\mathrm{row}},\;\emptyset,\;\rho^{\mathrm{NFA}(W)}\bigr).
\]
We write $\gamma_{\mathrm{NFA}(W)}(\gamma)$ for the resulting constructor.
\hfill$\blacktriangleleft$
\end{example}

\paragraph{Discussion.}
This construction does not introduce a new representation class.
It is a time-restricted instance of NFA, obtained by replacing the neighborhood
$N(v)$ in Appendix~C.4 with its temporally filtered variant $N_W(v)$.
All expressiveness considerations for NFA therefore apply unchanged.

%% file: appendix/appendix-datasets.tex
\section{Experimental setup}
\subsection{Datasets} \label{app:datasets}
\subsubsection{Synthetic transactions dataset}
We construct a synthetic transaction dataset to study learning scenarios in which prediction targets depend on relationships between rows, while keeping the data representation itself purely tabular. Each row corresponds to a single transaction and contains only standard tabular attributes; no explicit relational or graph structure is encoded in the dataset.

\paragraph{Tabular schema.}
Each transaction is represented by the following columns:
\begin{itemize}
    \item \texttt{id}: a unique, sequential transaction identifier.
    \item \texttt{card\_id}: a categorical identifier representing the payment card used.
    \item \texttt{merchant\_id}: a categorical identifier representing the merchant.
    \item \texttt{merchant\_city}: a categorical attribute indicating the merchant’s city, or the special value \texttt{ONLINE} for online merchants.
\end{itemize}

\paragraph{Merchant generation.}
A fixed pool of merchants is generated for each dataset split. Each merchant is assigned a unique integer identifier. Merchants are independently labeled as \emph{online} with a fixed probability (the \emph{online share}); online merchants are assigned the city label \texttt{ONLINE}. All remaining merchants are considered physical and are assigned a city sampled uniformly at random from a predefined list of European cities. The merchant city attribute is therefore a deterministic function of the merchant identifier.

\paragraph{Card generation and coverage.}
Card identifiers are generated as formatted strings (e.g., \texttt{C000317}) to mimic realistic identifiers and avoid numerical formatting issues. To prevent degenerate cases where a card never appears, the data generation process enforces that \emph{every card identifier occurs at least once}. This is achieved by assigning each card to exactly one transaction before any additional sampling takes place.

\paragraph{Transaction-level sampling.}
After guaranteeing one occurrence per card, the remaining transactions are populated by sampling card identifiers with replacement from the card pool. Sampling probabilities follow a bounded power-law distribution, so that some cards appear more frequently than others. This induces heterogeneous card usage frequencies while maintaining full coverage.

Merchant identifiers are sampled independently for each transaction, also using a skewed distribution. As a result, a small number of merchants account for a disproportionate number of transactions, while most merchants appear infrequently. This design reflects common properties of real transaction data and is critical for inducing non-trivial inter-row dependencies.

\paragraph{Assembly and randomization.}
Transaction rows are assembled by combining transaction identifiers, sampled card identifiers, and sampled merchant identifiers, and then joining with the merchant table to attach the corresponding city attribute. The resulting table is randomly shuffled to remove any ordering artifacts introduced by the card-coverage step. After shuffling, transaction identifiers are reassigned to ensure they remain sequential.

\paragraph{Train, validation, and test splits.}
Training, validation, and test datasets are generated independently using the same procedure but with different random seeds and dataset sizes. No identifiers are shared across splits, ensuring an inductive evaluation setting. The exact dataset sizes and parameters are summarized in Table~\ref{tab:synthetic-transactions-stats}.

\paragraph{Example rows.}
Table~\ref{tab:synthetic-transactions-example} shows representative rows from a generated dataset. Card and merchant identifiers repeat across rows, and the merchant city attribute is entirely determined by the merchant identifier.

\begin{table}[h]
\centering
\caption{Example rows from the synthetic transaction dataset.}
\label{tab:synthetic-transactions-example}
\begin{tabular}{r l r l}
\hline
\textbf{id} & \textbf{card\_id} & \textbf{merchant\_id} & \textbf{merchant\_city} \\
\hline
1 & C000317 & 42  & Brussels \\
2 & C000102 & 17  & ONLINE   \\
3 & C000317 & 105 & Paris    \\
4 & C000891 & 42  & Brussels \\
5 & C000045 & 311 & Berlin   \\
6 & C000102 & 17  & ONLINE   \\
\hline
\end{tabular}
\end{table}

\paragraph{Dataset statistics.}
Table~\ref{tab:synthetic-transactions-stats} reports summary statistics for each split. By construction, every card appears at least once in each split, while both card usage and merchant usage exhibit heavy-tailed frequency distributions.

\begin{table}[h]
\centering
\caption{Summary statistics of the synthetic transaction datasets.}
\label{tab:synthetic-transactions-stats}
\begin{tabular}{l r r r}
\hline
\textbf{Statistic} & \textbf{Train} & \textbf{Validation} & \textbf{Test} \\
\hline
\# transactions        & 8{,}000 & 1{,}000 & 1{,}000 \\
\#  unique cards        & 2{,}500 & 350     & 350     \\
\# unique merchants    & 3{,}500 & 300     & 300     \\
Online merchants     & 0.15    & 0.12    & 0.12 \\
\hline
\end{tabular}
\end{table}

This generation process deliberately separates \emph{representation} from \emph{structure}: although the data are provided as a simple table, repeated categorical values induce implicit cross-row dependencies. This allows us to precisely control which prediction tasks require reasoning beyond independent rows, while avoiding confounds introduced by explicit relational encodings.

\subsubsection{Real transactions dataset (Retail)}

We use a transactional dataset containing all purchases recorded between 01/12/2010 and 09/12/2011 for a UK-based, registered non-store online retailer.
The dataset consists of 541.909 transaction records with six numerical features and exhibits multivariate, sequential, and time-series characteristics.
Each row corresponds to an individual transaction line item associated with a specific invoice, customer, and timestamp.
The retailer primarily sells unique, all-occasion gifts, and a substantial fraction of its customers are wholesalers.
For our experiments, we restrict attention to the final 10\% of the data in chronological order.
This subset is further split based on timestamps into training, validation, and test sets using a 90/10/10 temporal split, ensuring strict temporal consistency and preventing information leakage across splits.

Table~\ref{tab:transaction_example} shows a simplified example of transaction rows from the dataset.
Each row represents a single purchased item within an invoice; multiple rows may share the same invoice number, customer identifier, and timestamp.

\begin{table}[h]
\centering
\small
\begin{tabular}{l l l r l r l l}
\hline
InvoiceNo & StockCode & Description & Qty & InvoiceDate & Price & CustomerID & Country \\
\hline
581220 & 16225 & Rattle Snake Eggs & 24 & 2011-12-08 09:39 & 0.39 & 13924 & UK \\
581220 & 16258A & Swirly Circular Rubbers (Bag) & 36 & 2011-12-08 09:39 & 0.19 & 13924 & UK \\
581220 & 21411 & Gingham Heart Doorstop (Red) & 8 & 2011-12-08 09:39 & 1.95 & 13924 & UK \\
\hline
\end{tabular}
\caption{Illustrative example of transaction rows (synthetic but representative). Multiple rows correspond to distinct items within the same invoice.}
\label{tab:transaction_example}
\end{table}

\subsubsection{\texttt{relbench-trial}}
The \texttt{relbench-trial} dataset is a relational clinical trial database derived from the Aggregate Analysis of ClinicalTrials.gov (AACT) initiative, which consolidates protocol and results information for studies registered on ClinicalTrials.gov. The dataset contains structured metadata describing clinical studies, including study design characteristics, interventions, conditions, outcomes, eligibility criteria, sponsors, facilities, and administrative information.

The dataset is organized as a multi-table relational database consisting of 15 tables. The primary table for prediction tasks is \texttt{studies}, where each row corresponds to a single registered clinical trial. This table includes high-level descriptive attributes such as study type, phase, enrollment information, allocation, masking, intervention model, sponsor details, and study status. The prediction target is defined at the level of the \texttt{studies} table following the RelBench task specification.

The remaining tables provide auxiliary information linked to \texttt{studies} through primary--foreign key relationships. These tables describe clinical conditions, interventions, outcomes, design groups, sponsors, facilities, and eligibility criteria, among others. Together, these tables define a heterogeneous relational schema, which is illustrated in Figure~\ref{fig:trial-schema}.

The temporal coverage of the dataset begins on January 1, 2000. The dataset follows the RelBench benchmark protocol, which defines time-based splits and prediction windows.

\begin{table}[h]
\centering
\caption{Summary statistics of the \texttt{relbench-trial} dataset.}
\label{tab:relbench-trial-stats}
\begin{tabular}{ll}
\toprule
\textbf{Attribute} & \textbf{Value} \\
\midrule
Domain & Medical \\
Number of tables & 15 \\
Number of rows & 5,852,157 \\
Number of columns & 140 \\
Start date & 2000-01-01 \\
Validation timestamp & 2020-01-01 \\
Test timestamp & 2021-01-01 \\
Prediction time window & 1 year \\
\bottomrule
\end{tabular}
\end{table}
\begin{figure*}[t]
    \centering
    \includegraphics[width=0.8\linewidth]{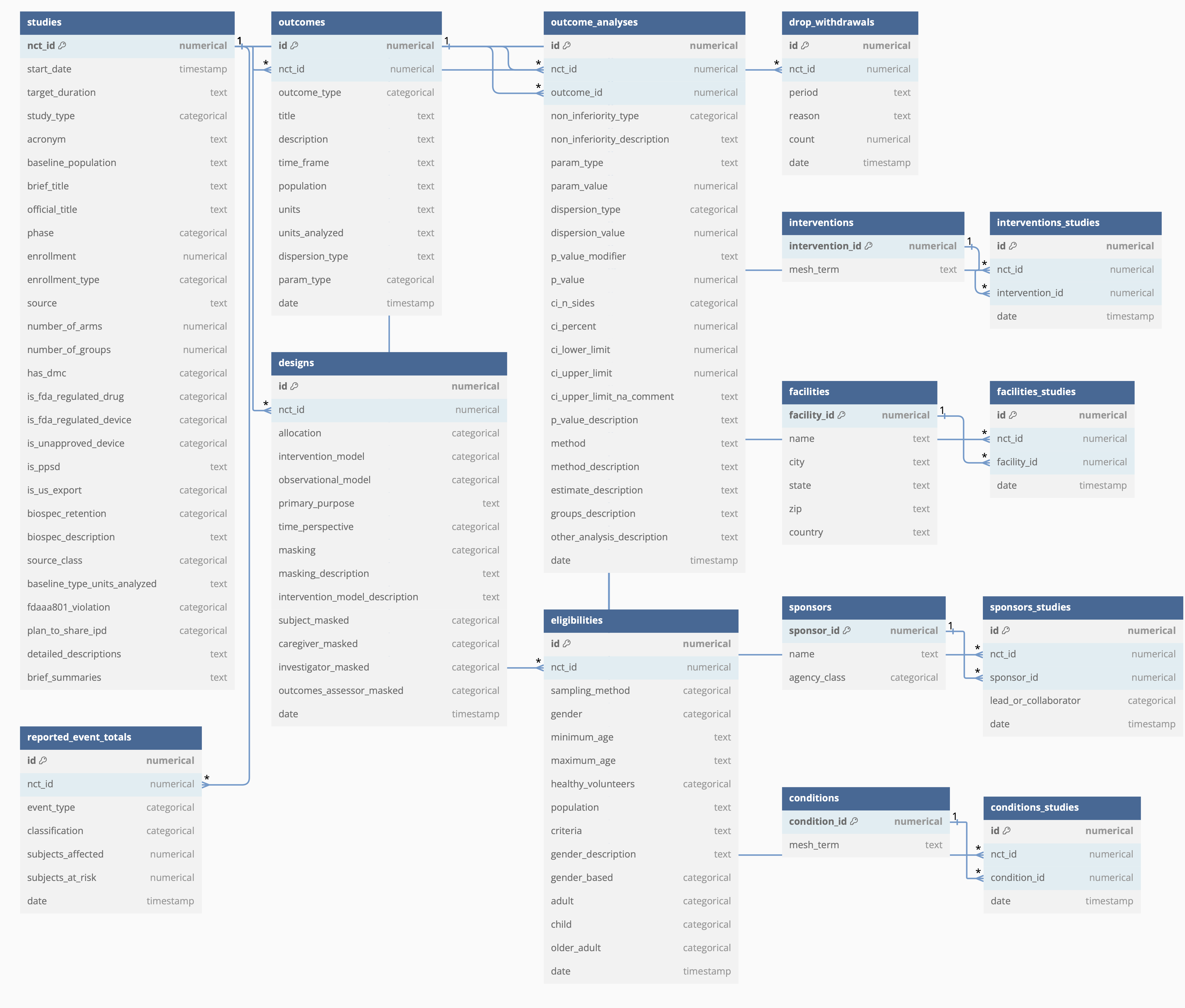}
    \caption{Relational schema of the \texttt{relbench-trial} relational dataset. The task studied is associated with table \texttt{studies} that contain descriptive information about the clinical studies. Taken from \url{https://relbench.stanford.edu/datasets/rel-trial/}}
    \label{fig:trial-schema}
\end{figure*}

\subsection{Logical tasks generation} \label{app:task-generation}

To evaluate models on controlled inter-row dependencies, we derive binary classification labels from a given transaction table by applying four deterministic labeling functions. Each function maps the input table to an augmented table by adding one new label column. Importantly, all labels are defined \emph{within a single split} (train/validation/test) using only the rows in that split; the label of a row therefore depends on the multiset of values present in the corresponding split, rather than on any external data.

Throughout, let $D$ denote a table with rows indexed by $i \in \{1,\dots,|D|\}$ and columns (attributes) such as \texttt{card\_id}, \texttt{merchant\_id}, and \texttt{merchant\_city}. Each task produces a label $y_i \in \{0,1\}$ for each row $i$.

\paragraph{Task 1: \textsc{UNIQUE}}
Given a column $c$, the uniqueness task labels a row as positive if its value in column $c$ occurs \emph{exactly once} in the entire table. 
In the implementation, we compute the frequency (value count) of each distinct value in column $c$ and set the label to 1 precisely for rows whose value has frequency 1.

\paragraph{Task 2: \textsc{COUNT} and \textsc{COUNT}(=).}
Given a column $c$ and an integer threshold $k$, the counting task labels a row based on how often its value appears in column $c$ across the table. Two variants are supported:

\begin{itemize}
    \item \textbf{Greater-than variant:} $y_i = 1$ iff the value in column $c$ appears more than $k$ times. 
    \item \textbf{Equality variant:} $y_i = 1$ iff the value in column $c$ appears exactly $k$ times. 
\end{itemize}

Operationally, both variants are implemented by computing value counts for column $c$ and mapping the corresponding count back to each row before applying the threshold test.

\paragraph{Task 3: \textsc{DOUBLE} (existence of another matching row with an anchor attribute).}
Given two columns $c_1$ and $c_2$, and a designated anchor value $a$ in column $c_2$, the double-condition task labels a row as positive if there exists \emph{another} row that shares the same $c_1$ value and whose $c_2$ value equals the anchor. 
The implementation computes, for each distinct value $v$ of $c_1$, how many rows satisfy $(c_1=v) \wedge (c_2=a)$. This count is then mapped back to each row. To enforce the ``other row'' requirement, the labeling distinguishes two cases:
\begin{itemize}
    \item If $D[i,c_2] \neq a$, then $y_i=1$ iff there is at least one anchor row with the same $c_1$ value.
    \item If $D[i,c_2] = a$, then $y_i=1$ iff there are at least two anchor rows with the same $c_1$ value (so that one of them can serve as the required distinct witness row).
\end{itemize}

\paragraph{Task 4: \textsc{DIAMOND} ( duplicate pair).}
Given two columns $c_1$ and $c_2$, the diamond task (implemented as duplicate-pair detection) labels a row as positive if its $(c_1,c_2)$ pair occurs in more than one row. 
In the implementation, we group by the pair $(c_1,c_2)$, compute group sizes, and assign label 1 to all rows belonging to groups of size greater than 1. 

\paragraph{Summary.}
All four labeling functions are deterministic and depend on global frequency or co-occurrence properties of the table. Consequently, labels are \emph{extension-sensitive}: duplicating, removing, or adding rows can change the label of an existing row. This property is central to our experimental objective of isolating tasks that cannot be represented by strictly row-local predictors without access to inter-row information.

Table \ref{tab:proportions_tasks} shows the resulting statistics of the tasks labels in both datasets for the different splits
\input{tables/tab_proportions-tasks-datssets}

\subsection{Tabular models settings} \label{app:tabular_details}

To fit tabular models LightGBM and RealMLP we use AutoML library Autogluon. Performance report of table \ref{tab:logical-tasks} correspond to adjusting for 5 minutes in \texttt{medium} quality setting. Relaxing these constraint did not improve performance. 

\subsection{TabPFN settings and finetuning} \label{app:tabpfn-settings}

\input{tables/tab_tabpfn-ft}
We use the last released version of TabPFN which is TabPFN2.5 \cite{tabpfn2.5}. We evaluate the tasks with and without finetuning. In general, as seen in Table \ref{tab:tabpfn_finetune} the finetuning effect is not consistent for all tasks and it some cases it degrades model performance.

\subsection{Graph learning setting and models. } \label{app:gnn_details}

\subsubsection{Synthetic data setup} 
\paragraph{Graph construction.}
For synthetic experiments, we represent the data as a heterogeneous graph composed of a single \emph{transaction} node type (denoted \texttt{row}) and multiple \emph{categorical value} node types corresponding to discrete attributes (e.g., card identifiers, merchant identifiers, locations). Each transaction node is connected by a directed edge to exactly one node of each categorical type. Reverse edges are added for all relations, enabling information to propagate back to transaction nodes.

\paragraph{Node initialization.}
Transaction nodes are initialized with either constant features or a learned type embedding. Categorical value nodes do not carry attributes and are initialized structurally as the sum of:
(i) a learnable node-type embedding, and
(ii) a fixed, deterministic random hash vector.
The fixed hash vectors are non-trainable and serve to break symmetry between nodes with identical initial features and neighborhoods, which is particularly important in inductive settings.

\paragraph{Message passing.}
We use a multi-layer heterogeneous GraphSAGE architecture. For each relation type, messages are computed using a GraphSAGE convolution with \emph{sum aggregation}, preserving degree and count information. Messages from different relations are summed at each destination node. All message passing is performed in full-batch mode to avoid distortions of node degrees. Each layer applies a ReLU nonlinearity and optional dropout.

\paragraph{Prediction head and training.}
After the final message-passing layer, transaction node embeddings are passed through a shallow multilayer perceptron to produce a binary logit. Models are trained using a weighted binary cross-entropy loss to account for class imbalance. Decision thresholds are selected on a validation set to maximize the F1 score.

This model serves as a clean structural baseline, designed to evaluate the representational capacity of message-passing GNNs for logical and counting predicates under idealized conditions.

\paragraph{Training hyperparameters. } Table \ref{tab:hyperparams_synthetic} displays the training hyperparameter per taks. 

\input{tables/tab_synth-training-hyperparameters}

\subsubsection{Real transaction data setup}
\label{app:real-gnn}

\paragraph{Graph construction.}
For real transaction data, we use the same heterogeneous graph schema but including \emph{all} available features (e.g., card, client, merchant, city, state). This setting reflects realistic relational noise, where multiple relations coexist and only a subset may be relevant to a given prediction task. As in the synthetic setting, reverse edges are added for all relations.

Separate graphs are constructed for training, validation, and testing. Node identities are not shared across splits, ensuring an inductive evaluation protocol.

\paragraph{Node initialization.}
Transaction nodes are initialized from real-valued feature vectors derived from tabular attributes (e.g., transaction amount, time features, and hashed categorical indicators). These features are linearly projected into the hidden embedding space.

Feature-value nodes do not carry attributes and are initialized structurally using the same mechanism as in the synthetic data setting: a learnable type embedding plus a fixed deterministic hash vector. This ensures symmetry breaking without relying on shared node identities across splits.

\paragraph{Message passing and training.}
Both the model architecture and training hyperparameters are kept constant from the synthetic data case. 






\subsubsection{Case study data setup}
For the case study, we distinguish three models. 
\begin{enumerate}
    \item \textbf{Tab.} A tree-based row-local (potentially ensemble) method trained using AutoGluon over all of their tree-based method. Best model is chosen using validation ROC-AUC
    \item \textbf{GNN on tabular incidence graph.} HeteroSage architecture trained on the incidence graph. 
    \item \textbf{GNN on relational entity graph.} To allow fair comparison, we use the GNN proposed by \citet{relbench_neurips} with the same training hyperparameters. 
\end{enumerate}

The tree-based model can either be fed with the original tabular features (e.g. Tree(T)), or with the original features plus GNN-learned embeddings (which range from 64 to 128 new added dimensions).

\section{More experimental results and analysis}
\subsection{Synthetic data} 
\subsubsection{Sensitivity Analysis} \label{app:synthetic-sensitivity}
To assess the consistency of the models in the logical task, we perform a sensitivity analysis where we perturb the test set (while keeping the distributions parameter constant). 

Figure \ref{fig:sensitivity-f1} shows the variation of F1-score of models LightGBM, RealMLP and Sage in 5 different perturbations of the test set. 

\begin{figure}[h]
    \centering
    \includegraphics[width=0.9\linewidth]{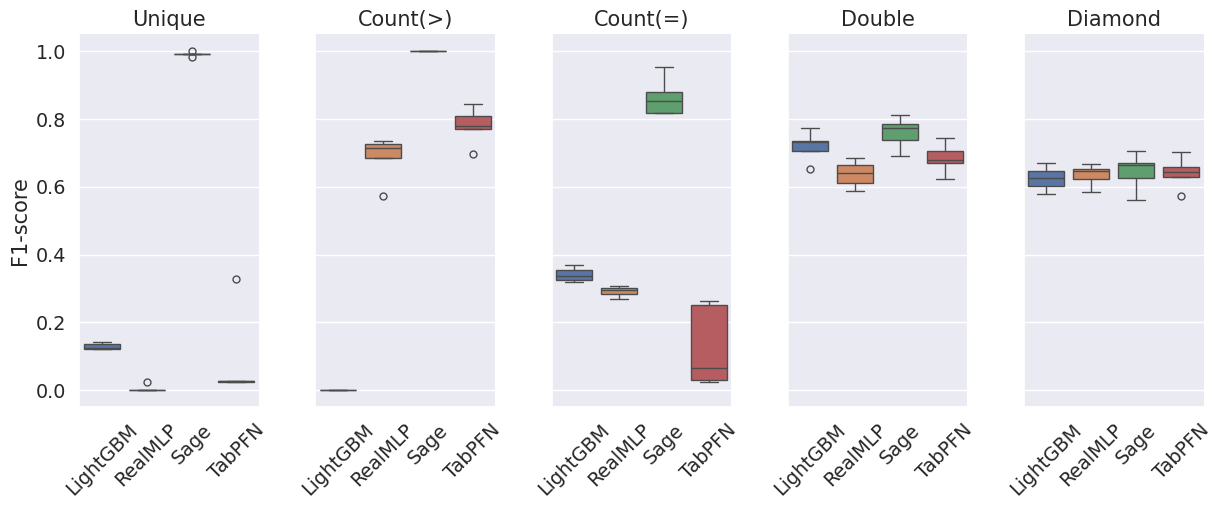}
    \caption{Box-plots for F1 score of models LightGBM, RealMLP, Sage and TabPFN in 5 different perturbations of the original test set. }
    \label{fig:sensitivity-f1}
\end{figure}

\subsubsection{Stress-test set} \label{app:stress-set}

Motivated by the unexpectedly strong performance of tabular models on several tasks, we construct a stress-test set designed to eliminate spurious correlations that could lead to artificially inflated evaluation scores. In particular, this stress-test explicitly removes row-local cues and feature interactions that allow correct predictions without modeling cross-row dependencies.

The construction enforces that successful prediction of the target requires aggregating information across multiple rows. Consequently, any strategy based solely on per-row statistics, local feature patterns, or marginal distributions is insufficient to achieve above-chance performance. This setting therefore provides a controlled evaluation of whether models genuinely capture the intended relational structure of the task, rather than exploiting accidental shortcuts. 

\paragraph{Stress test set generation. } To create the stress test set, we analyze the possible \emph{row-local shortcuts} that might lead to high performance of tabular datasets in the logical tasks
\begin{itemize}
    \item \textsc{Unique}: No signs of row-local shortcuts. Stress set is generated flipping labels, records with features that were unique in training, are non-unique in stress-test
    \item \textsc{Count}: With inequality (higher-than count) most frequent values in training are also more frequent values in validation in test and might serve as proxy. Stress-set generation ensures that high-count value features in train appear below threshold in stress-test. For the equality case, we do not see signs of row-local proxies for the tasks. 
    \item \textsc{Double}: can be approximated by single condition, in particular the most frequent condition of the two. Stress-test ensures that the double condition cannot be well approximated via a single condition (feature = value)
    \item \textsc{Diamond}: can be approximated by taking most popular values: stress-set contains a more balanced sample so taking the most popular value of one feature at training does not provide a good proxy for repetition
\end{itemize}

\paragraph{Results on stress test set. } Figure \ref{fig:stress-set-appendix} shows the results of models in the stress-test sets for each task. 

\begin{figure}
    \centering
    \includegraphics[width=0.9\linewidth]{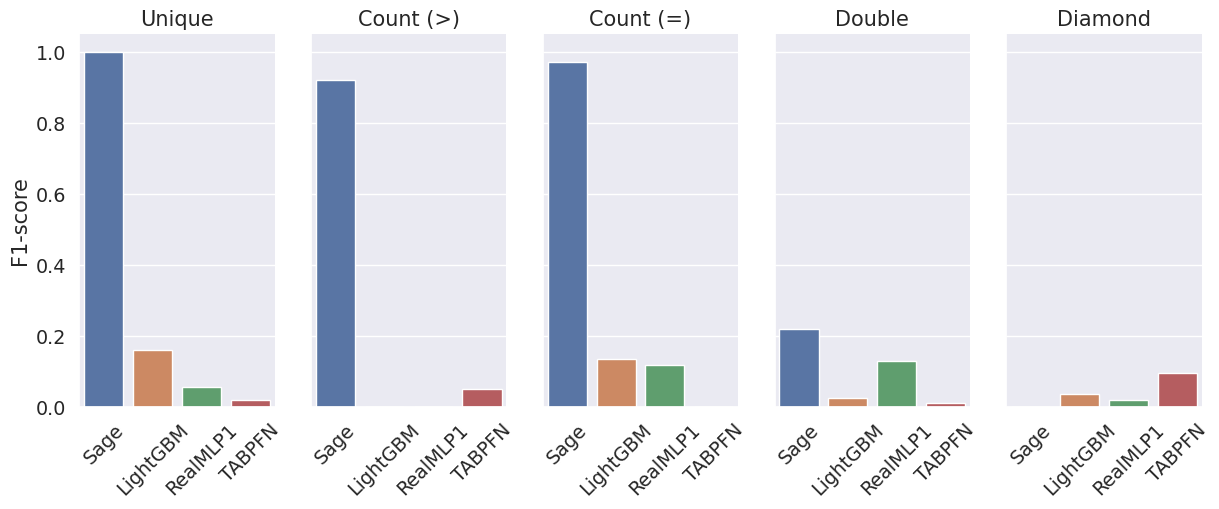}
    \caption{F1 score of each model in the stress-set data. GNN presents good performance on 3 of the 5 tasks, even when the data does not allow learning shortcuts.} 
    \label{fig:stress-set-appendix}
\end{figure}

\subsection{Transactional data} 
\subsubsection{Removing distracting information.} \label{app:transactions-remove-noise}

To address how the extra information coming from the features that are not considered in the logical tasks affect the learning and performance of the different models, we create a simplifies version of the dataset, where only features used in the creation of the logical tasks are considered. This mirrors the synthetic dataset properties. 

\input{tables/tab_retail-full-vs-clean}

Tables \ref{tab:appendix_retail_part1} and \ref{tab:appendix_retail_part2} show both F1 and ROC-AUC performance for the 4 logical tasks, using the original retail transactions data and the simplified (or clean) version of it. We observe that the GNN mantains its performance for almost every task, with the exception of Counting with equality, where it lifts significantly.


\subsection{Case study} 
\subsubsection{Feature importance of different models} \label{app:feature-importance}

\paragraph{Tabular only model.} We train a tabular-only model using AutoGluon’s tree-based ensemble methods and evaluate feature importance via permutation importance with respect to the F1 score. As shown in Figure~\ref{fig:tab-feature-importance}, text-derived fields dominate the model’s predictive performance, while most structured metadata features exhibit smaller or statistically indistinguishable effects. Several features yield near-zero or negative importance, indicating limited or potentially detrimental contribution to model performance.

\begin{figure}[h]
    \centering
    \includegraphics[width=0.7\linewidth]{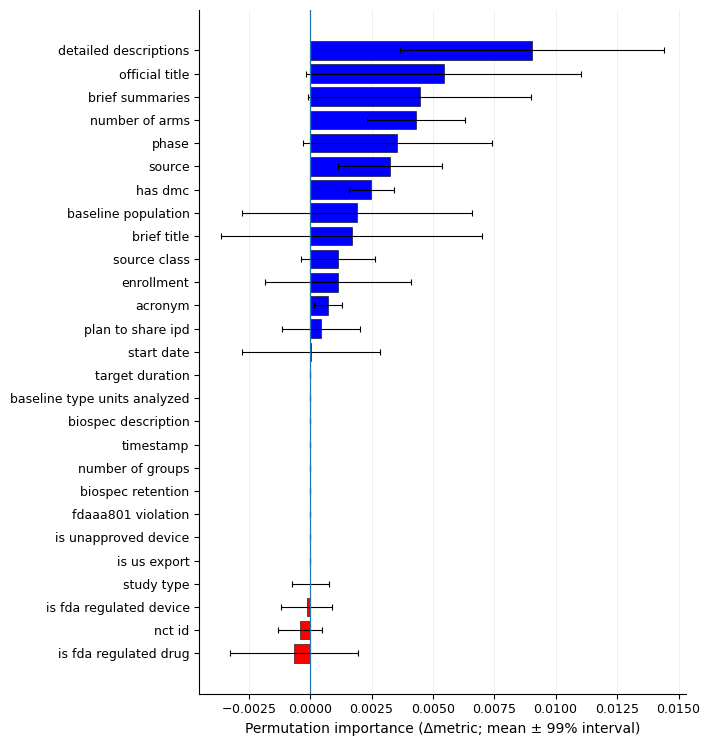}
    \caption{
Permutation feature importance based on ROC-AUC score for the AutoGluon tabular model.
Bars show the mean change in F1 score ($\Delta$F1) under feature permutation, with 99\% confidence intervals from repeated shuffles.
Positive values indicate performance degradation, while negative values indicate improved performance after permutation.
}
    \label{fig:tab-feature-importance}
\end{figure}

\paragraph{Tabular + NFA (intra-table).} A similar tree-based tabular model, trained and ensembled with Autogluon is applied over the tabular data with extra features coming from the time-aware tabular neighbor feature aggregation procedure (Appendix \ref{app:tab-ta-nfa}). Figure \ref{fig:importance-nfa} shows the most important 15 features. The top features are shared with the tabular only method, but from the 15 most important features, 6 corresponts to newly aggregated features. 

\begin{figure}[h]
    \centering
    \includegraphics[width=0.7\linewidth]{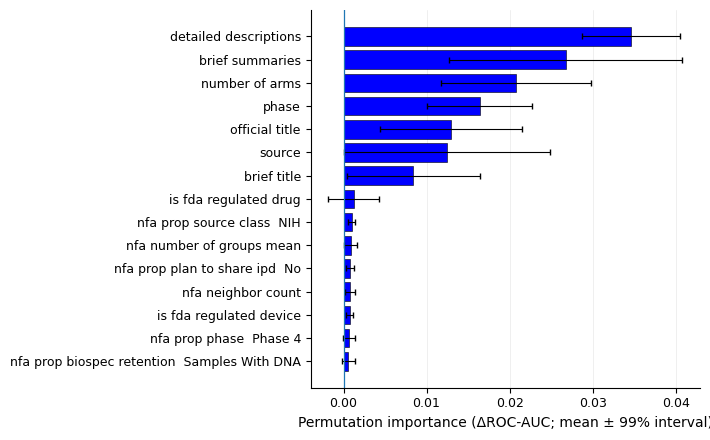}
    \caption{Permutation feature importance based on ROC-AUC score for the AutoGluon tabular model with tabular time-aware neighor feature aggregation features.
Bars show the mean change in ROC-AUC score ($\Delta$ROC-AUC) under feature permutation, with 99\% confidence intervals from repeated shuffles. Trimmed to most relevant 15 attributes. }
    \label{fig:importance-nfa}
\end{figure}

\paragraph{Tabular + GNN embeddings (intra-table).} Figure \ref{fig:importance-tab-gnn-t} shows the 15 most important features for the tabular tree-based model trained on the tabular data augmented with embeddings obtained from training a GNN over the incidence graph of the table. Out of the 15 most relevant features, 9 corresponds to GNN embeddings generated features, showing how the model is actually using them to predict. At the top, there still remain the free text features from the data. 

\begin{figure}[h]
    \centering
    \includegraphics[width=0.7\linewidth]{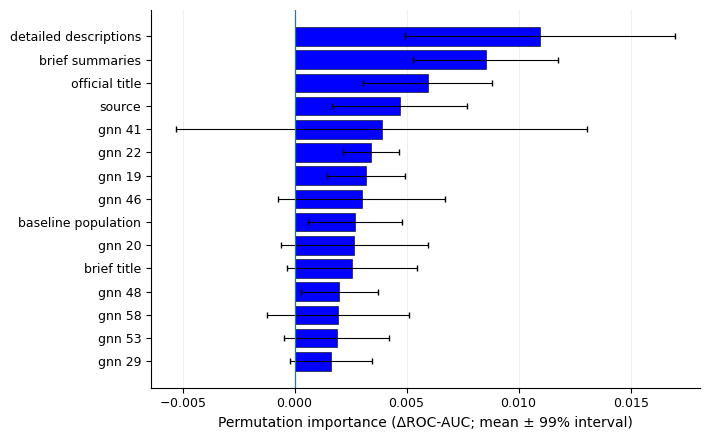}
    \caption{Permutation feature importance based on ROC-AUC score for the AutoGluon tabular model augmented with embeddings from a GNN trained on the incidente graph from the table.
Bars show the mean change in ROC-AUC score ($\Delta$ROC-AUC) under feature permutation, with 99\% confidence intervals from repeated shuffles. Trimmed to most relevant 15 attributes. Important to note that out of the 15 most important attributes, 9 correspond to GNN embeddings. }
    \label{fig:importance-tab-gnn-t}
\end{figure}

\paragraph{Tabular + GNN embeddings (relational entity graph), }Figure \ref{fig:importance-tab-gnn-reg} shows the 15 most important features for the tabular tree-based model trained on the tabular data augmented with embeddings obtained from training a GNN over the relational entity graph derived from the full relational database (Figure \ref{fig:trial-schema}). Out of the 15 most relevant features, 7 corresponds to GNN embeddings generated features. Moreover, its interesting to see how more of the original tabular features place at the top when the intra-structure of the table is not exposed in the graph (and therefore not properly captured in the GNN embeddings)

\begin{figure}[h]
    \centering
    \includegraphics[width=0.7\linewidth]{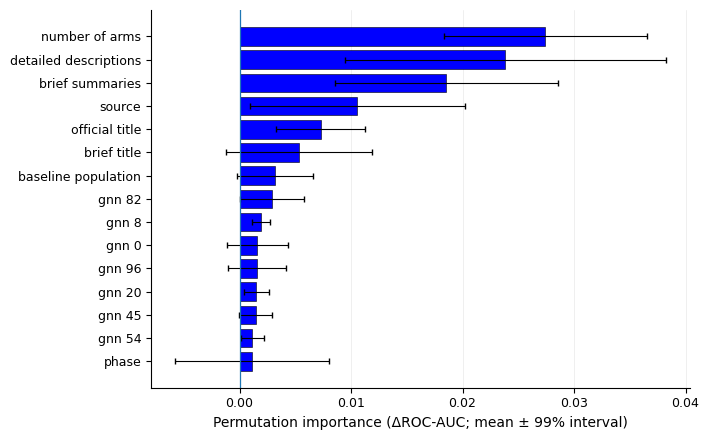}
    \caption{Permutation feature importance based on ROC-AUC score for the AutoGluon tabular model augmented with embeddings from a GNN trained on the relational entity graph from the entire database.
Bars show the mean change in ROC-AUC score ($\Delta$ROC-AUC) under feature permutation, with 99\% confidence intervals from repeated shuffles. Trimmed to most relevant 15 attributes. Important to note that out of the 15 most important attributes, 7 correspond to GNN embeddings. }
    \label{fig:importance-tab-gnn-reg}
\end{figure}


%% file: tables/tab_proportions-tasks-datssets.tex
\begin{table}[h]
\centering
\caption{Proportion of presence of label tasks across train, validation, and test splits for each dataset}
\begin{tabular}{l l c c c}
\hline
Dataset & Task & Train (\%) & Val (\%) & Test (\%) \\
\hline
Synthetic Data & \textsc{UNIQUE} &  13.1& 6.6& 7.5\\
Synthetic Data & \textsc{COUNT} & 24.9 & 18.0& 18.4\\
Synthetic Data & \textsc{COUNT}(=) & 15.4& 15.0 & 14.1 \\
Synthetic Data & \textsc{DOUBLE} & 48.2 & 42.8 & 43.4\\
Synthetic Data & \textsc{DIAMOND} & 30.7 & 28.1 & 24.6\\
Retail Data  & \textsc{UNIQUE} & 0.9& 14.2& 10.5\\
Retail Data  & \textsc{COUNT} & 47.0& 44.6& 55.8\\
Retail Data  & \textsc{COUNT}(=) & 1.4 & 2.7& 1.3\\
Retail Data  & \textsc{DOUBLE} & 41.4 & 42.0& 50.6\\
Retail Data  & \textsc{DIAMOND} & 15.5 & 6.0 & 4.5\\
\hline
\end{tabular}
\label{tab:proportions_tasks}
\end{table}

%% file: tables/tab_tabpfn-ft.tex
\begin{table}[t]
\caption{Effect of finetuning for TabPFN on logical tasks in the synthetic data.}
\label{tab:tabpfn_finetune}
\begin{center}
\begin{small}
\begin{sc}
\begin{tabular}{llrrrr}
\toprule
Task & Metric & Original & Finetuned & Abs.\ diff. & Rel.\ gain/loss \\
\midrule
\textsc{UNIQUE} & roc\_auc & 0.8230 & 0.8744 & +0.0514 & \textcolor{blue}{+6.25\%} \\
\textsc{UNIQUE} & f1      & 0.0250 & 0.0240 & -0.0010 & \textcolor{red}{-4.00\%} \\
\addlinespace
\textsc{COUNT} & roc\_auc & 0.9620 & 0.9290 & -0.0330 & \textcolor{red}{-3.43\%} \\
\textsc{COUNT} & f1      & 0.7750 & 0.8810 & +0.1060 & \textcolor{blue}{+13.68\%} \\
\addlinespace
\textsc{COUNT(=)} & roc\_auc & 0.3400 & 0.3310 & -0.0090 & \textcolor{red}{-2.65\%} \\
\textsc{COUNT(=)} & f1      & 0.0550 & 0.0137 & -0.0413 & \textcolor{red}{-75.09\%} \\
\addlinespace
\textsc{DOUBLE} & roc\_auc & 0.6700 & 0.8260 & +0.1560 & \textcolor{blue}{+23.28\%} \\
\textsc{DOUBLE} & f1      & 0.6990 & 0.7550 & +0.0560 & \textcolor{blue}{+8.01\%} \\
\addlinespace
\textsc{DIAMOND} & roc\_auc & 0.8680 & 0.8671 & -0.0009 & \textcolor{red}{-0.10\%} \\
\textsc{DIAMOND} & f1      & 0.7020 & 0.7258 & +0.0238 & \textcolor{blue}{+3.39\%} \\
\bottomrule
\end{tabular}
\end{sc}
\end{small}
\end{center}
\vskip -0.1in
\end{table}

%% file: tables/tab_synth-training-hyperparameters.tex
\begin{table}[t]
\centering
\caption{Training hyperparameters for synthetic dataset across different tasks.}
\label{tab:hyperparams_synthetic}
\begin{tabular}{lccccc}
\hline
\textbf{Hyperparameter} &
\textbf{Uniqueness} &
\textbf{Count ($>$)} &
\textbf{Count ($=$)} &
\textbf{Double condition} &
\textbf{Diamond} \\
\hline
Layers              & 2   & 2   & 3   & 3   & 4 \\
Hidden dimension    & 64  & 64 & 128 & 64  & 64 \\
Dropout             & 0.0 & 0.0 & 0.0 & 0.0 & 0.0 \\
Learning rate       & 1e-3 & 1e-3 & 1e-3 & 1e-3 & 1e-3 \\
Weight decay        &  1e-4& 1e-4 & 1e-6 & 1e-4 & 1e-4\\
Aggregation              & sum & sum & sum & sum & sum  \\
Epochs              & 75 & 75 & 150 & 75 & 75  \\
\hline
\end{tabular}
\end{table}

%% file: tables/tab_retail-full-vs-clean.tex
\begin{table}[t]
\centering
\caption{Performance on original vs.\ cleaned retail data (Unique, Count).
$\Delta = \text{Clean} - \text{Orig}$.}
\label{tab:appendix_retail_part1}
\begin{tabular}{lccccccccc}
\toprule
\textbf{Model} &
\multicolumn{3}{c}{\textbf{$t_1$: unique}} &
\multicolumn{3}{c}{\textbf{$t_2$: count($>$)}} &
\multicolumn{3}{c}{\textbf{$t_2$: count($=$)}} \\
\cmidrule(lr){2-4}
\cmidrule(lr){5-7}
\cmidrule(lr){8-10}
& \textbf{Orig} & \textbf{Clean} & $\boldsymbol{\Delta}$
& \textbf{Orig} & \textbf{Clean} & $\boldsymbol{\Delta}$
& \textbf{Orig} & \textbf{Clean} & $\boldsymbol{\Delta}$ \\
\midrule
\multicolumn{10}{c}{\textit{F1-score}} \\
\midrule
GNN (SAGE)
& 0.988 & 0.989 & +0.001
& 0.967 & 0.997 & +0.030
& 0.171 & 1.000 & \bf+0.829 \\
LightGBM
& 0.190 & 0.190 & +0.000
& 0.000 & 0.000 & +0.000
& 0.027 & 0.027 & +0.000 \\
realMLP
& 0.190 & 0.191 & +0.001
& 0.881 & 0.885 & +0.004
& 0.027 & 0.027 & +0.000 \\
TabPFN
& 0.404 & 0.406 & +0.002
& 0.001 & 0.001 & -0.000
& 0.023 & 0.019 & -0.004 \\
\midrule
\multicolumn{10}{c}{\textit{ROC--AUC}} \\
\midrule
GNN (SAGE)
& 0.998 & 0.999 & +0.001
& 0.999 & 0.999 & +0.000
& 0.903 & 1.000 & +0.097 \\
LightGBM
& 0.570 & 0.540 & -0.030
& 0.500 & 0.000 & -0.500
& 0.500 & 0.500 & +0.000 \\
realMLP
& 0.853 & 0.708 & -0.145
& 0.929 & 0.882 & -0.047
& 0.477 & 0.570 & +0.093 \\
TabPFN
& 0.793 & 0.802 & +0.009
& 0.827 & 0.561 & -0.266
& 0.758 & 0.716 & -0.042 \\
\bottomrule
\end{tabular}
\end{table}

\begin{table}[t]
\centering
\caption{Performance on original vs.\ cleaned retail data (Double, Diamond).
$\Delta = \text{Clean} - \text{Orig}$.}
\label{tab:appendix_retail_part2}
\begin{tabular}{lcccccc}
\toprule
\textbf{Model} &
\multicolumn{3}{c}{\textbf{$t_3$: double condition}} &
\multicolumn{3}{c}{\textbf{$t_4$: diamond}} \\
\cmidrule(lr){2-4}
\cmidrule(lr){5-7}
& \textbf{Orig} & \textbf{Clean} & $\boldsymbol{\Delta}$
& \textbf{Orig} & \textbf{Clean} & $\boldsymbol{\Delta}$ \\
\midrule
\multicolumn{7}{c}{\textit{F1-score}} \\
\midrule
GNN (SAGE)
& 0.863 & 0.855 & -0.008
& 0.251 & 0.249 & -0.002 \\
LightGBM
& 0.815 & 0.672 & -0.143
& 0.199 & 0.087 & -0.112 \\
realMLP
& 0.000 & 0.819 & +0.819
& 0.139 & 0.169 & +0.030 \\
TabPFN
& 0.005 & 0.005 & +0.000
& 0.007 & 0.007 & +0.000 \\
\midrule
\multicolumn{7}{c}{\textit{ROC--AUC}} \\
\midrule
GNN (SAGE)
& 0.910 & 0.908 & -0.002
& 0.807 & 0.836 & +0.029 \\
LightGBM
& 0.847 & 0.420 & -0.427
& 0.821 & 0.769 & -0.052 \\
realMLP
& 0.703 & 0.827 & +0.124
& 0.792 & 0.743 & -0.049 \\
TabPFN
& 0.463 &  0.380 & -0.083 \\
& 0.659 & 0.560 & -0.099 \\
\bottomrule
\end{tabular}
\end{table}